\let\captiontemp\@makecaption\makeatother
\let\@makecaption\captiontemp\makeatother
\ificcvfinal\pagestyle{empty}\fi
\newcommand{\bR}{\mathtt{R}}
\newcommand{\bt}{\mathbf{t}}
\newcommand{\br}{\mathbf{r}}
\newcommand{\bx}{\mathbf{x}}
\newcommand{\bbf}{\mathbf{f}}
\newcommand{\bp}{\mathbf{p}}
\newcommand{\be}{\mathbf{e}}
\newcommand{\bO}{\mathbf{O}}
\newcommand{\sC}{\mathcal{C}}
\newcommand{\sV}{\mathcal{V}}
\newcommand{\sS}{\mathcal{S}}
\newcommand{\sSk}{\mathcal{S}\hspace{-1pt}k}
\newcommand{\sP}{\mathcal{P}}
\newcommand{\sF}{\mathcal{F}}
\newcommand{\indfn}{\mathbf{1}}
\newcommand{\defeq}{\triangleq}
\newcommand{\transpose}{^{\intercal}}
\newcommand{\ubar}[1]{\underaccent{\bar}{#1}}
\let\OldStatex\Statex
\renewcommand{\Statex}[1][3]{
  \setlength\@tempdima{\algorithmicindent}
  \OldStatex\hskip\dimexpr#1\@tempdima\relax}
\newtheorem{theorem}{Theorem}
\newtheorem{lemma}{Lemma}
\begin{document}

\title{Globally-Optimal Inlier Set Maximisation for\\Simultaneous Camera Pose and Feature Correspondence}
\author{Dylan Campbell\textsuperscript{1,2}, Lars Petersson\textsuperscript{1,2}, Laurent Kneip\textsuperscript{1} and Hongdong Li\textsuperscript{1}\\
\textsuperscript{1}Australian National University%
\thanks{\tiny This research is supported by an Australian Government Research Training Program (RTP) Scholarship.}
\,\,\,\,\,\,\,\,\,\,\,\textsuperscript{2}Data61 -- CSIRO%
\\
{\tt\small \{dylan.campbell,lars.petersson,laurent.kneip,hongdong.li\}@anu.edu.au}
}

\maketitle

\begin{abstract}
\vspace{-6pt}
Estimating the 6-DoF pose of a camera from a single image relative to a pre-computed 3D point-set is an important task for many computer vision applications. Perspective-$n$-Point (P$n$P) solvers are routinely used for camera pose estimation, provided that a good quality set of 2D--3D feature correspondences are known beforehand. However, finding optimal correspondences between 2D key-points and a 3D point-set is non-trivial, especially when only geometric (position) information is known. Existing approaches to the simultaneous pose and correspondence problem use \emph{local optimisation}, and are therefore unlikely to find the optimal solution without a good pose initialisation, or introduce restrictive assumptions. Since a large proportion of outliers are common for this problem, we instead propose a globally-optimal inlier set cardinality maximisation approach which jointly estimates optimal camera pose and optimal correspondences. Our approach employs branch-and-bound to search the 6D space of camera poses, guaranteeing global optimality without requiring a pose prior. The geometry of $SE(3)$ is used to find novel upper and lower bounds for the number of inliers and local optimisation is integrated to accelerate convergence. The evaluation empirically supports the optimality proof and shows that the method performs much more robustly than existing approaches, including on a large-scale outdoor data-set.

\vspace{-12pt}
\end{abstract}

\section{Introduction}
\label{sec:introduction}

\begin{figure}[!t]
\centering
\vspace{-4pt}
\begin{subfigure}[]{\columnwidth}
\includegraphics[trim=25pt 15pt 35pt 25pt, clip=true, scale=.65]{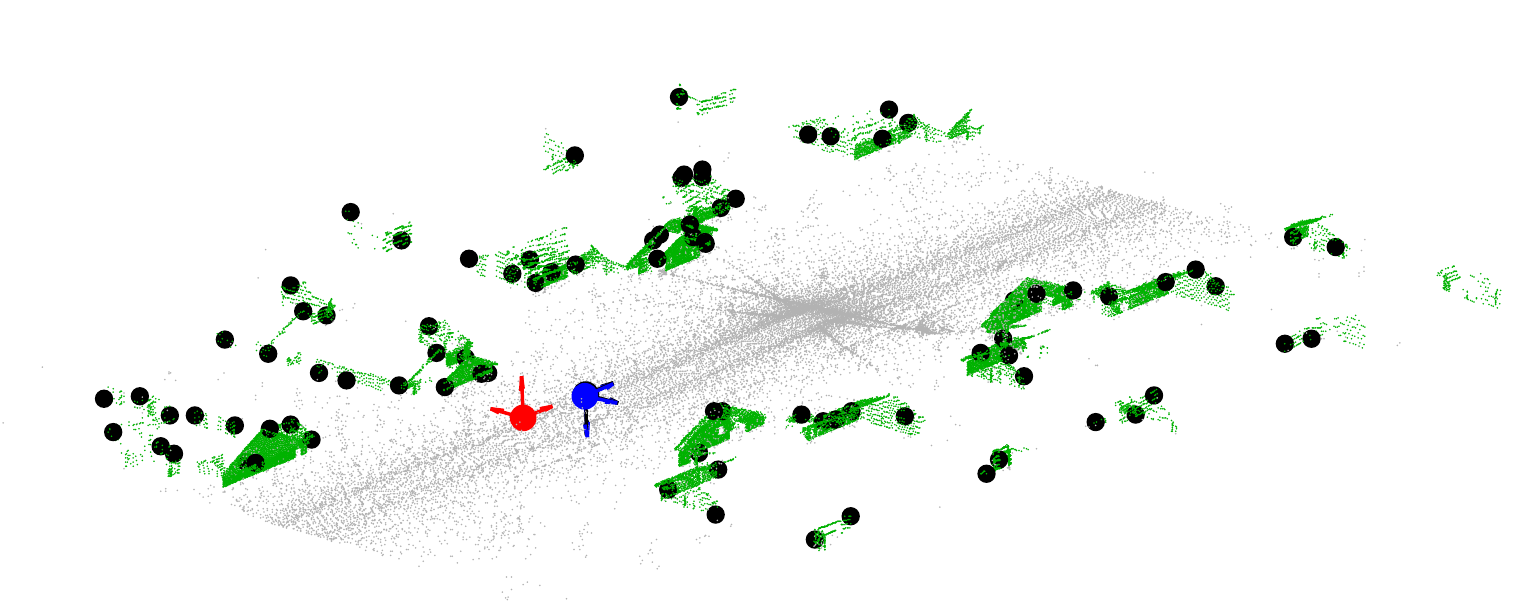}
\caption{3D point-set (grey and green), 3D features (black dots) and ground-truth (black), RANSAC (red) and our (blue) camera poses. The ground-truth and our camera poses coincide, whereas the RANSAC pose has a translation offset and a $180^{\circ}$ rotation offset. Best viewed in colour.}
\label{fig:results_2d3d_3d}
\end{subfigure}
\vfill
\begin{subfigure}[]{\columnwidth}
\includegraphics[trim=156pt 330pt 255pt 196pt, clip=true, scale=.31]{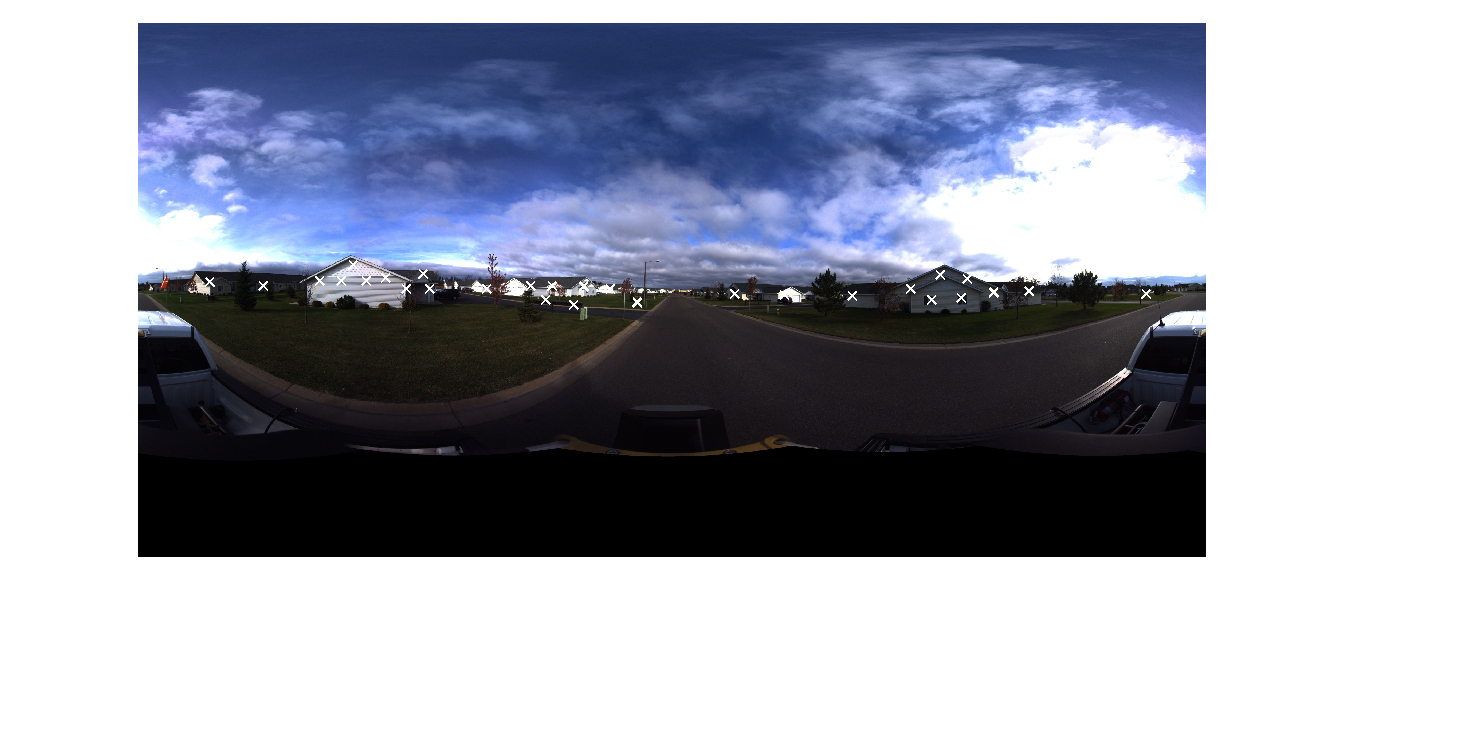}
\vfill
\includegraphics[trim=156pt 330pt 255pt 196pt, clip=true, scale=.31]{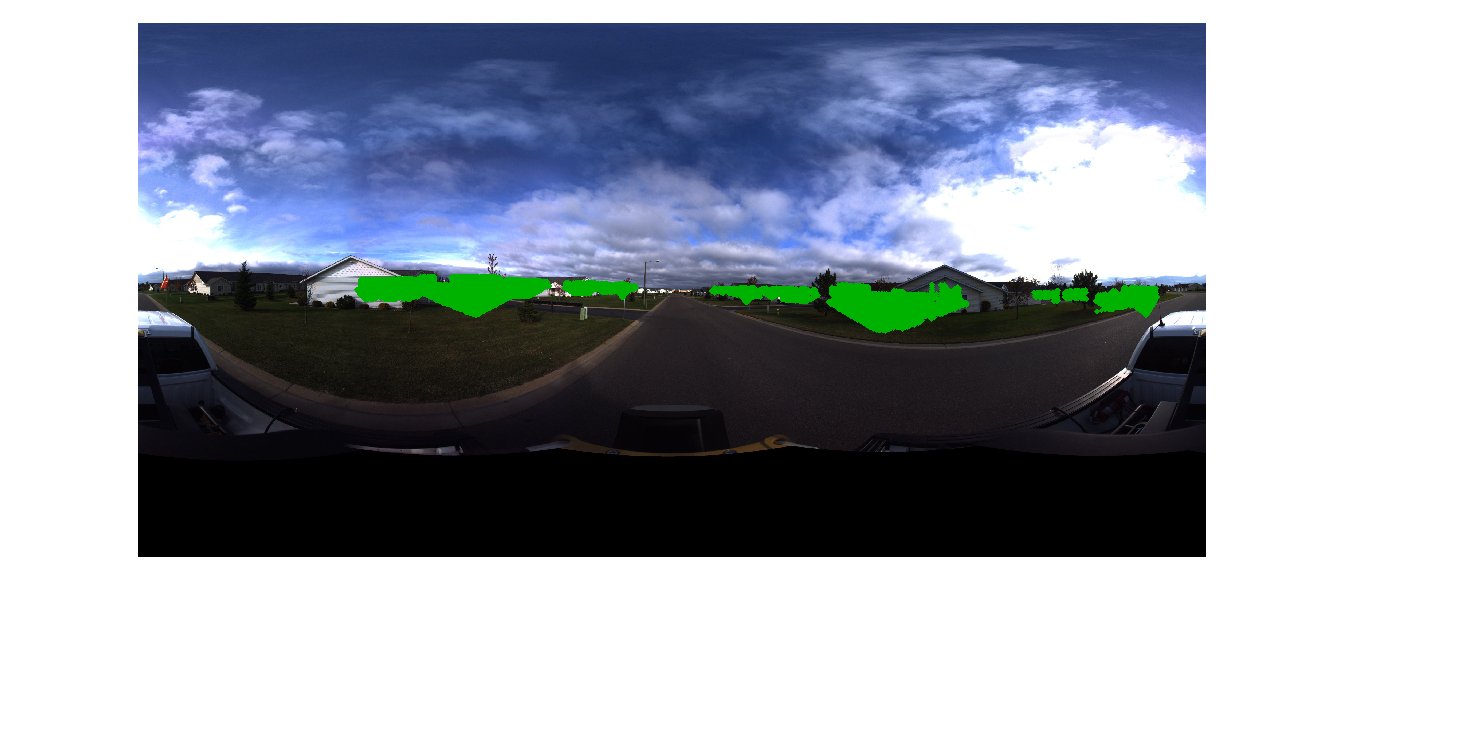}
\vfill
\includegraphics[trim=156pt 330pt 255pt 196pt, clip=true, scale=.31]{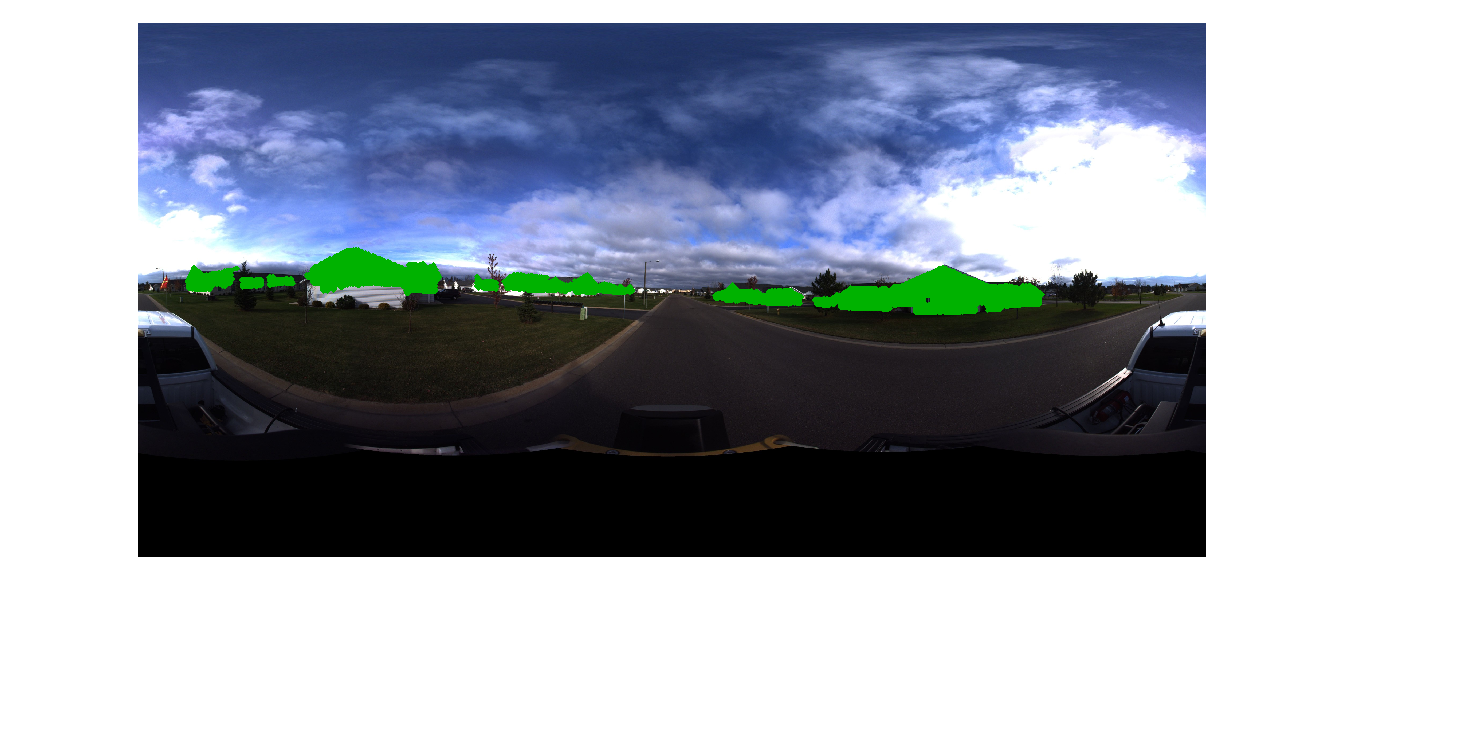}
\caption{Panoramic photograph and extracted 2D features (top), building points projected onto the image using the RANSAC camera pose (middle) and building points projected using our camera pose (bottom).}
\label{fig:results_2d3d_2d}
\end{subfigure}
\caption{Estimating the pose of a calibrated camera from a single image within a large-scale, unorganised 3D point-set captured by vehicle-mounted laser scanner. Our method solves the absolute pose problem while simultaneously finding feature correspondences, using a globally-optimal branch-and-bound approach with tight novel bounds on the cardinality of the inlier set.}
\label{fig:results_2d3d}
\vspace{-12pt}
\end{figure}

Estimating the pose of a calibrated camera given a set of 2D points in the camera frame and a set of 3D points in the world frame, as shown in Figure~\ref{fig:results_2d3d}, is a fundamental part of the general 2D--3D registration problem of aligning an image with a 3D scene or model. When correspondences are known, this becomes the Perspective-$n$-Point (P$n$P) problem for which many solutions exist \cite{haralick1994review,lepetit2009epnp,kneip2011novel,hesch2011direct,kneip2014upnp}. Applications include camera localisation and tracking \cite{fischler1981random, noll2011markerless, kneip2015sdicp}, augmented reality \cite{marchand2016pose}, motion segmentation \cite{olson2001general} and object recognition \cite{huttenlocher1990recognizing,mundy2006object,aubry2014seeing}.

While hypothesise-and-test frameworks like RANSAC \cite{fischler1981random} can mitigate the sensitivity of P$n$P solvers to outliers in the correspondence set, few approaches are able to handle the case where 2D--3D correspondences are not known in advance. Unknown correspondences arise in many circumstances, including the general case of aligning an image with a textureless 3D point-set or CAD model. While feature extraction techniques provide a relatively robust and reproducible way to detect interest points such as edges or corners within each modality, finding correspondences across the two modalities is much more challenging. Even when the point-set has sufficient visual information associated with it, such as colour or SIFT features \cite{lowe2004distinctive}, repetitive features, occlusions and perspective distortion make the correspondence problem non-trivial. Moreover, appearance and thus visual features may change significantly between viewpoints, lighting conditions, weather and seasons, whereas scene geometry is often less affected. When re-localising a camera in a previously mapped environment or bootstrapping a tracking algorithm, we contend that geometry is often more reliable. Therefore, there is a need for methods that solve for both pose and correspondences.

Efficient local optimisation algorithms for solving this joint problem have been proposed \cite{david2004softposit,moreno2008pose}. However, they require a pose prior, search only for local optima and do not provide an optimality guarantee, yielding erroneous pose estimates without a reliable means of detecting failure. Hypothesise-and-test approaches such as RANSAC \cite{fischler1981random}, when applied to the correspondence-free problem \cite{grimson1990object}, are global methods that are not reliant on pose priors but quickly become computationally intractable as the number of points and outliers increase and do not provide an optimality guarantee. More recently, a global and $\epsilon$-suboptimal method has been proposed \cite{brown2015globally}, which uses a branch-and-bound approach to find a camera pose whose trimmed geometric error is within $\epsilon$ of the global minimum.

This work is the first to propose a global and optimal inlier set cardinality maximisation solution to the simultaneous pose and correspondence problem. The approach employs the branch-and-bound framework to guarantee global optimality without requiring a pose prior, ensuring that it is not susceptible to local optima. We use a parametrisation of $SE(3)$ space that facilitates branching and derive novel bounds on the objective function. In addition, we also apply local optimisation whenever the algorithm finds a better transformation, to accelerate convergence without voiding the optimality guarantee. Cardinality maximisation allows an exact optimiser to be found, unlike the $\epsilon$-suboptimality inherent to the continuous objective function used in \cite{brown2015globally}. More critically, cardinality maximisation is inherently robust to 2D and 3D outliers, while avoiding the problems associated with trimming. The latter requires the user to specify the inlier fraction, which can rarely be known and is less intuitive to select than a geometrically meaningful inlier threshold. If the inlier fraction is over- or under-estimated, this approach may converge to the wrong pose, without a means to detect failure. Figure \ref{fig:trimming} demonstrates how the global optimum of a trimmed objective function, as used by \cite{brown2015globally,yang2016goicp}, may not occur at the true pose, a problem that is exacerbated when the inlier fraction is guessed incorrectly.

\begin{figure}[!t]
\centering
\includegraphics{./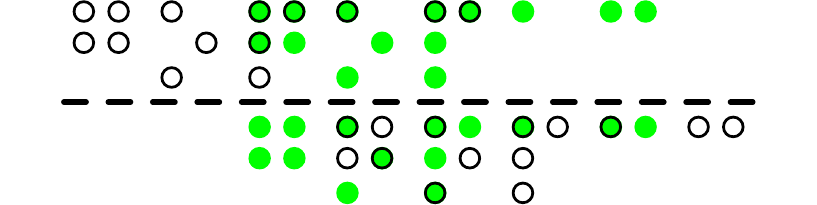}
\caption{Two zero-error but incorrect 1D alignments of 2 point-sets with 8 trimmed `outliers'. With noise, the global optimum of a trimmed objective function may not occur at the true pose, particularly if an incorrect trimming fraction is selected. The problem is exacerbated with higher dimensions and degrees of freedom.}
\label{fig:trimming}
\vspace{-12pt}
\end{figure}

\section{Related Work}
\label{sec:related_work}

A large body of work exists for solving the 2D--3D registration problem when correspondences are provided. When the correspondences are known perfectly, Perspective-$n$-Point (P$n$P) solvers \cite{haralick1994review,lepetit2009epnp,kneip2011novel,hesch2011direct,kneip2014upnp} are able to estimate the pose of a camera given a set of noisy image points and their corresponding 3D points. When outliers are present in the correspondence set, the RANSAC framework \cite{fischler1981random, chum2008optimal} or robust global optimisation \cite{li2009consensus, enqvist2012robust, ask2013optimal, svarm2014accurate, enqvist2015tractable, svarm2016city} can be used to find the inlier set. Alternatively, outlier removal schemes can make the problem more tractable \cite{sim2006removing, olsson2010outlier, yu2011adversarial, chin2016guaranteed}. Other methods develop sophisticated matching strategies to avoid outlier correspondences at the outset \cite{li2010location, sattler2011fast, sattler2012improving, li2012worldwide}. However, these methods require some correct correspondences. For this reason, they are often only practical for 3D models that have been constructed using stereopsis or Structure-from-Motion (SfM). These models associate an image feature with each 3D point, facilitating inter-modality feature matching. Generic point-sets do not have this property; a point may lie anywhere on the underlying surfaces in a laser scan, not just where strong image gradients occur.

When correspondences are unknown, the problem becomes more challenging. For the 2D--2D case, problems such as correspondence-free rigid registration \cite{besl1992method,breuel2003implementation}, SfM \cite{dellaert2000structure,makadia2007correspondence,lin2012simultaneous} and relative camera pose \cite{fredriksson2016optimal} have been addressed. For the 2D--3D case, solution have been proposed for registering a collection of images \cite{paudel2015robust} or multiple cameras \cite{paudel2015lmi} to a 3D point-set. The more general problem, however, is pose estimation from a single image. David \etal \cite{david2004softposit} proposed the SoftPOSIT algorithm, which alternates correspondence assignment with an iterative pose update algorithm. Moreno-Noguer \etal \cite{moreno2008pose} proposed the BlindPnP algorithm, which represents the pose prior as a Gaussian mixture model from which a Kalman filter is initialised for matching. It outperformed SoftPOSIT when large amounts of clutter, occlusions and repetitive patterns were present. However, both are susceptible to local optima, require a pose prior and cannot guarantee global optimality.

Grimson \cite{grimson1990object} applied a RANSAC-like approach to the correspondence-free case, removing the need for a pose prior, but the method is not optimal and quickly becomes intractable as the number of points increase. In contrast, globally-optimal methods find a camera pose that is guaranteed to be an optimiser of an error function without requiring a pose prior, but tractability remains a challenge. A Branch-and-Bound (BB) \cite{land1960automatic} strategy may be applied in these cases, for which bounds need to be derived. For example, Breuel \cite{breuel2003implementation} used BB for 2D--2D registration problems, Hartley and Kahl \cite{hartley2009global} for optimal relative pose estimation by bounding the group of 3D rotations, Li and Hartley~\cite{li20073d} for rotation-only 3D--3D registration, Olsson \etal~\cite{olsson2009branch} for 3D--3D registration with known correspondences, Yang \etal~\cite{yang2016goicp} for full 3D--3D registration and Campbell and Petersson~\cite{campbell2016gogma} for robust 3D--3D registration. While not optimal, Jurie \cite{jurie1999solution} used an approach similar to BB for 2D--3D alignment with a linear approximation of perspective projection. Brown \etal \cite{brown2015globally} proposed a global and $\epsilon$-suboptimal method using BB. It finds a camera pose whose trimmed geometric error, the sum of angular distances between the bearings and their rotationally-closest 3D points, is within $\epsilon$ of the global minimum. While not susceptible to local minima, it requires the inlier fraction to be specified, which can rarely be known in advance, in order to trim outliers.

Our work is the first globally-optimal inlier set cardinality maximisation solution to the simultaneous pose and correspondence problem. It is guaranteed to find the exact global optimum without requiring a pose prior and is robust to 2D and 3D outliers while avoiding the distortion of trimming. The rest of the paper is organised as follows: we introduce the problem formulation in Section~\ref{sec:inlier_set_maximisation}, develop a parametrisation of the domain of 3D motions, a branching strategy and a derivation of the bounds in Section~\ref{sec:bb}, propose an algorithm for globally-optimal pose and correspondence in Section~\ref{sec:algorithm} and evaluate its performance in Section~\ref{sec:results}.

\section{Inlier Set Cardinality Maximisation}
\label{sec:inlier_set_maximisation}

Let $\bp \in \mathbb{R}^3$ be a 3D point and $\bbf \in \mathbb{R}^3$ be a bearing vector with unit norm, corresponding to a 2D point imaged by a calibrated camera. That is, $\bbf \propto \mathtt{K}^{-1}\hat{\bx}$ where $\mathtt{K}$ is the matrix of intrinsic camera parameters and $\hat{\bx}$ is the homogeneous image point. Given a set of points $\sP = \{\bp_j\}_{j=1}^{M}$ and bearing vectors $\sF = \{\bbf_i\}_{i=1}^{N}$ and an inlier threshold $\theta$, the objective is to find a rotation $\bR \in SO(3)$ and translation $\bt \in \mathbb{R}^3$ that maximises the cardinality $\nu$ of the inlier set $\sS_I$
\vspace{-2pt}
\begin{equation}
\label{eqn:inlier_maximisation}
\nu^* = \max_{\bR,\,\bt} |\sS_I|
\end{equation}
\vspace{-12pt}
\begin{equation}
\label{eqn:inlier_set_definition}
\sS_I = \{\bbf \in \sF \mid \exists \bp \in \sP : \angle(\bbf, \bR(\bp - \bt)) \leqslant \theta\}
\end{equation}
where $\angle(\cdot,\cdot)$ denotes the angular distance between vectors.
An equivalent formulation is given by
\vspace{-2pt}
\begin{equation}
\label{eqn:inlier_maximisation_function}
\nu^* = \max_{\bR,\,\bt} f(\bR, \bt)
\end{equation}
\vspace{-12pt}
\begin{equation}
\label{eqn:inlier_function}
f(\bR,\bt) = \sum_{\bbf \in \sF} \max_{\bp \in \sP} \indfn\big(\theta - \angle(\bbf, \bR(\bp - \bt))\big)
\end{equation}
where $\indfn(x) \defeq \indfn_{\mathbb{R}_{\ge 0}}(x)$ is the indicator function that has the value 1 for all elements of the non-negative real numbers and the value 0 otherwise. The optimal transformation parameters $\bR^{*}$ and $\bt^{*}$ allow us to find all correspondences $(\bbf_i, \bp_j)$ with respect to $\theta$ by identifying all pairs for which $\angle(\bbf_i, \bR^*(\bp_j - \bt^*)) \leqslant \theta$. We maximise the cardinality of the set of bearing vector inliers, not the set of 3D point inliers, to avoid the degenerate case of all points sharing the same bearing vector inlier, which occurs when the camera is translated far away from the point-set.

\section{Branch-and-Bound}
\label{sec:bb}

To solve the highly non-convex cardinality maximisation problem (\ref{eqn:inlier_maximisation}), the global optimisation technique of Branch-and-Bound (BB) \cite{land1960automatic} may be applied. To do so, a suitable means of parametrising and branching (partitioning) the function domain must be found, as well as an efficient way to calculate upper and lower bounds of the function for each branch which converge as the size of the branch tends to zero. While the bounds need to be computationally efficient to calculate, the time and memory efficiency of the algorithm also depends on how tight the bounds are, since tighter bounds reduce the search space quicker by allowing suboptimal branches to be pruned.

\subsection{Parametrising and Branching the Domain}
\label{sec:bb_branching}

To find a globally-optimal solution, the cardinality of the inlier set $\sS_I$ must be maximised over the domain of 3D motions, that is, the group $SE(3) = SO(3) \times \mathbb{R}^3$. However, the space of these transformations is unbounded, therefore we restrict the space of translations to be within the bounded set $\Omega_t$ in order to use BB. For a suitably large $\Omega_t$, it is reasonable to assume that the camera centre lies within $\Omega_t$. That is, we can assume that the camera is a finite distance from the 3D points. The domains are shown in Figure~\ref{fig:domain}.

\begin{figure}[!t]
\centering
\begin{subfigure}[]{0.495\columnwidth}
\centering
\def\svgwidth{0.65\columnwidth}
\input{./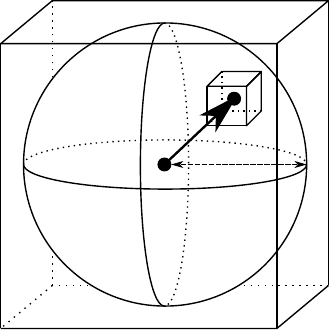_tex}
\caption{Rotation Domain $\Omega_r$}
\label{fig:domain_rotation}
\end{subfigure}
\hfill
\begin{subfigure}[]{0.495\columnwidth}
\centering
\def\svgwidth{0.65\columnwidth}
\input{./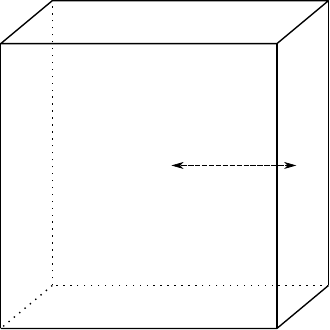_tex}
\caption{Translation Domain $\Omega_t$}
\label{fig:domain_translation}
\end{subfigure}
\caption{Parametrisation of $SE(3)$. (\subref{fig:domain_rotation})~The rotation space $SO(3)$ is parametrised by angle-axis 3-vectors in a solid radius-$\pi$ ball. (\subref{fig:domain_translation})~The translation space $\mathbb{R}^3$ is parametrised by 3-vectors bounded by a cuboid with half-widths $[\tau_x, \tau_y, \tau_z]$. The domain is branched into sub-cuboids as shown using nested octree data structures.}
\label{fig:domain}
\vspace{-10pt}
\end{figure}

Rotation space $SO(3)$ is minimally parametrised with angle-axis 3-vectors $\br$ with rotation angle $\|\br\|$ and rotation axis $\br/\|\br\|$. The notation $\bR_{\br} \in SO(3)$ is used to denote the rotation matrix obtained from the matrix exponential map of the skew-symmetric matrix $[\br]_{\times}$ induced by $\br$. The Rodrigues' rotation formula can be used to efficiently calculate this mapping. Using this parametrisation, the space of all 3D rotations can be represented as a solid ball of radius $\pi$ in $\mathbb{R}^3$. The mapping is one-to-one on the interior of the $\pi$-ball and two-to-one on the surface. For ease of manipulation, we use the 3D cube circumscribing the $\pi$-ball as the rotation domain $\Omega_r$~\cite{li20073d}. Translation space $\mathbb{R}^3$ is parametrised with 3-vectors in a bounded domain chosen as the cuboid $\Omega_t$ containing the bounding box of $\sP$. If the camera is known to be inside the 3D scene, $\Omega_t$ can be set to the bounding box, otherwise it is set to an expansion of the bounding box.

During BB, the domain is branched into sub-cuboids using nested octree data structures. They are defined as
\begin{equation}
\label{eqn:subcube_definition}
\sC(\mathbf{c},\boldsymbol{\delta}) \!=\! \{\bx \in \mathbb{R}^3 \mid \be_i\transpose(\bx - \mathbf{c}) \in [-\delta_i, \delta_i], i = 1,2,3\}
\end{equation}
where $\be_i$ is the $i$\textsuperscript{th} standard basis vector. To simplify the notation, we use $\sC_r = \sC(\br_0,\boldsymbol{\delta}_r)$ and $\sC_t = \sC(\bt_0,\boldsymbol{\delta}_t)$.

The uncertainty angle induced by a rotation and translation sub-cuboid on a point $\bp$ is shown in Figure~\ref{fig:uncertainty_angle}. The transformed point may lie anywhere within an uncertainty cone, with aperture angle equal to the sum of the rotation and translation uncertainty angles.

\begin{figure}[!t]
\centering
\begin{subfigure}[]{0.495\columnwidth}
\def\svgwidth{\columnwidth}
\input{./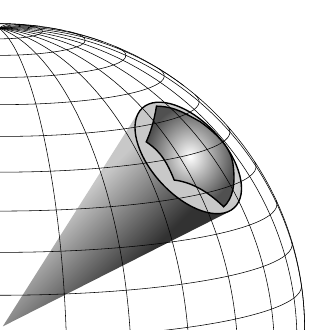_tex}
\caption{Rotation Uncertainty Angle}
\label{fig:uncertainty_angle_rotation}
\end{subfigure}
\hfill
\begin{subfigure}[]{0.495\columnwidth}
\def\svgwidth{\columnwidth}
\input{./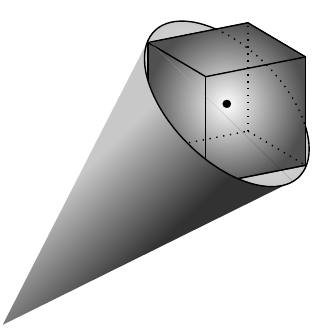_tex}
\caption{Translation Uncertainty Angle}
\label{fig:uncertainty_angle_translation}
\end{subfigure}
\caption{Uncertainty angles induced by rotation and translation sub-cubes. (\subref{fig:uncertainty_angle_rotation})~Rotation uncertainty angle $\psi_r$ for $\sC_r$. The optimal rotation of $\bp$ may be anywhere within the umbrella-shaped region, which is entirely contained by the cone defined by $\bR_{\br_{0}}\bp$ and $\psi_r$. (\subref{fig:uncertainty_angle_translation})~Translation uncertainty angle $\psi_t$ for $\sC_t$. The optimal translation of $\bp$ may be anywhere within the cuboidal region, which is entirely contained by the cone defined by $\bp - \bt_0$ and $\psi_t$.}
\label{fig:uncertainty_angle}
\vspace{-6pt}
\end{figure}

\subsection{Bounding the Branches}
\label{sec:bb_bounding}

The success of a BB algorithm is predicated on the quality of its bounds. For inlier set maximisation, the objective function (\ref{eqn:inlier_function}) needs to be bounded within a transformation domain. Some preparatory material is now presented.


To bound the uncertainty angle due to rotation, Lemmas 1 and 2 from \cite{hartley2009global} are used. For reference, the relevant parts are merged into Lemma~\ref{lm:rotation_inequality}, as in \cite{yang2016goicp}. The lemma indicates that the angle between two rotated vectors is less than or equal to the Euclidean distance between their rotations' angle-axis representations in $\mathbb{R}^3$.
\begin{lemma}
\label{lm:rotation_inequality}
For an arbitrary vector $\bp$ and two rotations, represented as $\bR_{\br_{1}}$ and $\bR_{\br_{2}}$ in matrix form and $\br_{1}$ and $\br_{2}$ in angle-axis form,
\begin{equation}
\label{eqn:rotation_inequality}
\angle(\bR_{\br_{1}}\bp, \, \bR_{\br_{2}} \bp) \leqslant \|\br_{1} - \br_{2}\|.
\end{equation}
\end{lemma}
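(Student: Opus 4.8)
The plan is to prove the inequality by reducing the two-rotation quantity to a single relative rotation and then chaining two separate bounds. Since the angle between vectors is scale-invariant, I take $\bp$ to be a unit vector. Rotations preserve angles, so applying $\bR_{\br_1}\transpose$ to both arguments gives
\begin{equation*}
\angle(\bR_{\br_1}\bp,\,\bR_{\br_2}\bp) = \angle\big(\bp,\,\bR_{\Delta}\bp\big), \qquad \bR_{\Delta}\defeq\bR_{\br_1}\transpose\bR_{\br_2}.
\end{equation*}
Writing $\alpha\in[0,\pi]$ for the rotation angle of $\bR_{\Delta}$, the strategy is to establish (i) $\angle(\bp,\bR_{\Delta}\bp)\le\alpha$ and (ii) $\alpha\le\|\br_1-\br_2\|$; composing these yields the claim.

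For (i), I would decompose $\bp$ into its component along the rotation axis $\mathbf{a}$ of $\bR_{\Delta}$ and its component $\bp_\perp$ orthogonal to $\mathbf{a}$. The axial part is fixed by $\bR_{\Delta}$ while $\bp_\perp$ is rotated by $\alpha$ within its plane, so a short computation gives $\bp\cdot\bR_{\Delta}\bp = (\mathbf{a}\cdot\bp)^2 + \|\bp_\perp\|^2\cos\alpha = \cos\alpha + (\mathbf{a}\cdot\bp)^2(1-\cos\alpha)$. Since $(\mathbf{a}\cdot\bp)^2\ge 0$ and $1-\cos\alpha\ge 0$, this is at least $\cos\alpha$, and because the cosine is decreasing on $[0,\pi]$ the turned angle is at most $\alpha$. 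This step is elementary plane geometry.

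For (ii), I identify $\alpha$ with the geodesic distance between $\bR_{\br_1}$ and $\bR_{\br_2}$ in the bi-invariant metric on $SO(3)$, normalised so that geodesic distance equals relative rotation angle. I then bound this distance by the length of a convenient connecting path: the image under the exponential map of the straight segment $\br(t)=(1-t)\br_1+t\br_2$, $t\in[0,1]$, which stays inside the radius-$\pi$ ball by convexity. Its speed in the bi-invariant metric is $\|J_r(\br(t))\,\dot{\br}(t)\|$, where $J_r$ is the right Jacobian of $SO(3)$ and $\dot{\br}(t)=\br_2-\br_1$ is constant. The key fact is that every singular value of $J_r(\br)$ is at most $1$: they are $1$ along $\br$ and $\sin(\tfrac{1}{2}\|\br\|)/(\tfrac{1}{2}\|\br\|)\le 1$ in the orthogonal plane. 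Hence the speed never exceeds $\|\br_2-\br_1\|$, and integrating over $t\in[0,1]$ bounds the path length, and thus the geodesic distance $\alpha$, by $\|\br_1-\br_2\|$.

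The main obstacle is step (ii): unlike (i), it cannot be reduced to plane geometry because $\bR_{\br_1}\transpose\bR_{\br_2}$ depends on $\br_1,\br_2$ through the non-commutative group product, so $\alpha$ is not a simple function of $\br_1-\br_2$. The substance of the argument is therefore the operator-norm bound $\|J_r\|\le 1$, equivalently the statement that the exponential map of the (non-negatively curved) group $SO(3)$ is distance non-increasing from the flat Lie algebra. I would verify the singular-value bound directly from the closed form of $J_r$, taking care that the segment remains within the $\pi$-ball so that the normalised geodesic distance genuinely coincides with the relative rotation angle. Since the paper attributes the result to Hartley and Kahl, one may alternatively invoke their derivation, but the path-length argument above is self-contained.
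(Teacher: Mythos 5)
Your proof is correct, but there is nothing in the paper to compare it against line-by-line: the paper does not prove this lemma at all, it imports it by citation from Hartley and Kahl (their Lemmas~1 and~2, merged as in Yang~et~al.), so the relevant comparison is with their derivation. Your step~(i) — reducing to the relative rotation $\bR_{\Delta}=\bR_{\br_1}\transpose\bR_{\br_2}$ and showing $\bp\cdot\bR_{\Delta}\bp=\cos\alpha+(\mathbf{a}\cdot\bp)^2(1-\cos\alpha)\geqslant\cos\alpha$ — is essentially their Lemma~1 and is airtight. Your step~(ii) takes a genuinely different route: Hartley and Kahl argue on the quaternion sphere, using the identity $d_{\angle}(\bR_{\br_1},\bR_{\br_2})=2\,d_{\mathrm{sphere}}(q_1,q_2)$ together with the fact that the angle-axis-to-quaternion map $\br\mapsto\big(\cos(\tfrac{1}{2}\|\br\|),\,\sin(\tfrac{1}{2}\|\br\|)\,\br/\|\br\|\big)$ contracts Euclidean distances by a factor of at least $2$; you instead run a Riemannian path-length argument, bounding the bi-invariant speed of $t\mapsto\bR_{(1-t)\br_1+t\br_2}$ by $\|J_r(\br(t))\,\dot{\br}(t)\|\leqslant\|\br_2-\br_1\|$ via the singular values of the right Jacobian. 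The two arguments are the same in substance — both say the exponential map is $1$-Lipschitz from the flat algebra into the group — but yours is self-contained, makes the mechanism explicit, and transfers to other compact groups, at the cost of having to justify that the body angular velocity of the path is $J_r(\br)\dot{\br}$ and that geodesic distance in the normalised bi-invariant metric equals the relative rotation angle. Two small tightenings: the convexity remark about the $\pi$-ball is unnecessary for the Jacobian bound, since the planar singular value $2|\sin(\tfrac{1}{2}\theta)|/\theta\leqslant 1$ holds for \emph{every} $\theta>0$ (so the connecting segment could leave the ball harmlessly); and the identification $\alpha=d(\bR_{\br_1},\bR_{\br_2})$ needs no condition on the path either — it holds because geodesics through the identity are one-parameter subgroups and $\alpha\in[0,\pi]$ by definition of the relative rotation angle, so only the endpoints matter.
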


From this, the maximum angle between a vector $\bp$ rotated by $\br_0$ and $\bp$ rotated by $\br \in \sC_r$ can be found as follows.
\begin{lemma}
\label{lm:rotation_uncertainty_angle_weak}
(Weak rotation uncertainty angle) Given a 3D point $\bp$ and a rotation cube $\sC_r$ of half side-length $\delta_r$ centred at $\br_0$, then $\forall \br \in \sC_r$,
\begin{equation}
\label{eqn:rotation_uncertainty_angle_weak}
\angle(\bR_\br \bp, \bR_{\br_0} \bp) \leqslant \min(\sqrt{3}\delta_r,\pi) \defeq \psi_{r}^{w}(\sC_r).
\end{equation}	
\end{lemma}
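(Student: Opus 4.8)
The plan is to chain the rotation inequality of Lemma~\ref{lm:rotation_inequality} with an elementary bound on the diameter of the cube $\sC_r$, and then observe that angular distances are inherently capped at $\pi$. First I would apply Lemma~\ref{lm:rotation_inequality} with $\br_1 = \br$ and $\br_2 = \br_0$ to the fixed point $\bp$, which immediately gives
\begin{equation*}
\angle(\bR_\br \bp, \bR_{\br_0}\bp) \leqslant \|\br - \br_0\|.
\end{equation*}
This reduces the geometric statement about rotated vectors to a purely metric statement about the displacement $\br - \br_0$ in angle-axis parameter space, so the rest of the argument no longer involves $\bp$ at all.

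Next I would bound $\|\br - \br_0\|$ using the definition of the sub-cube in (\ref{eqn:subcube_definition}). Since $\br \in \sC_r = \sC(\br_0, \boldsymbol{\delta}_r)$ with every half-width equal to $\delta_r$, each coordinate of the displacement satisfies $\be_i\transpose(\br - \br_0) \in [-\delta_r, \delta_r]$, hence $|\be_i\transpose(\br - \br_0)| \leqslant \delta_r$ for $i = 1,2,3$. The Euclidean norm is therefore maximised at a corner of the cube, where all three coordinates simultaneously attain magnitude $\delta_r$, yielding
\begin{equation*}
\|\br - \br_0\| = \sqrt{\textstyle\sum_{i=1}^3 \big(\be_i\transpose(\br - \br_0)\big)^2} \leqslant \sqrt{3\delta_r^2} = \sqrt{3}\,\delta_r.
\end{equation*}

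Finally, I would invoke the fact that the angular distance between two vectors can never exceed $\pi$ by definition, so the left-hand side is trivially bounded by $\pi$ as well. Combining this with the previous two displays and taking the smaller of the two upper bounds gives $\angle(\bR_\br \bp, \bR_{\br_0}\bp) \leqslant \min(\sqrt{3}\delta_r, \pi) \defeq \psi_r^w(\sC_r)$, as claimed. There is no real obstacle here: the result follows directly once Lemma~\ref{lm:rotation_inequality} converts the rotation angle into a distance in parameter space, after which the bound is geometry of the cube. The only points worth flagging are that it is the cube \emph{diagonal}, not its side length, that governs the worst case (hence the $\sqrt{3}$ factor), and that the $\pi$ cap is genuinely needed because $\sqrt{3}\delta_r$ can exceed $\pi$ for a large cube, whereas an angle between two vectors cannot.
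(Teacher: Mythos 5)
Your proposal is correct and follows exactly the paper's own argument: apply Lemma~\ref{lm:rotation_inequality} to reduce the angle to the parameter-space distance $\|\br - \br_0\|$, bound that by the half space diagonal $\sqrt{3}\delta_r$ of the cube, and cap everything at $\pi$. Your coordinate-wise justification of the diagonal bound via (\ref{eqn:subcube_definition}) merely spells out what the paper states in one line, so there is nothing to add or correct.
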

\begin{proof}
Inequality (\ref{eqn:rotation_uncertainty_angle_weak}) can be derived as follows:
\begin{align}
\angle(\bR_\br \bp, \bR_{\br_0} \bp)
&\leqslant \min({\|\br-\br_0\|},{\pi})
\label{eqn:rotation_uncertainty_angle_weak_1}\\
&\leqslant \min({\sqrt{3}\delta_r},{\pi})
\label{eqn:rotation_uncertainty_angle_weak_2}
\end{align}
where (\ref{eqn:rotation_uncertainty_angle_weak_1}) follows from Lemma~\ref{lm:rotation_inequality} and the maximum possible angle and (\ref{eqn:rotation_uncertainty_angle_weak_2}) follows from $\max \|\br - \br_0\| = \sqrt{3} \delta_r$ (the half space diagonal of the rotation cube) for $\br \in \sC_r$.
\end{proof}

However, a tighter bound can be found by observing that a point rotated about an axis parallel to the point is not displaced. To exploit this, we maximise the angle $\angle(\bR_\br \bp, \bR_{\br_0} \bp)$ over the surface $\sS_r$ of the cube $\sC_r$.
\begin{lemma}
\label{lm:rotation_uncertainty_angle}
(Rotation uncertainty angle) Given a 3D point $\bp$ and a rotation cube $\mathcal{C}_r$ centred at $\br_0$ with surface $\sS_r$, then $\forall \br \in \mathcal{C}_r$,
\begin{equation}
\label{eqn:rotation_uncertainty_angle}
\angle(\bR_\br \bp, \bR_{\br_0} \bp) \leqslant \min(\max_{\br \in \sS_{r}}\angle(\bR_{\br} \bp, \bR_{\br_0} \bp),\pi) \defeq \psi_{r}(\bp, \mathcal{C}_r).
\end{equation}
\end{lemma}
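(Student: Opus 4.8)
The plan is to show that the maximum of the displacement angle $g(\br) \defeq \angle(\bR_{\br}\bp, \bR_{\br_0}\bp)$ over the solid cube $\sC_r$ is attained on its surface $\sS_r$, so that the surface maximum, capped at $\pi$ (the largest possible angular distance), upper-bounds $g$ everywhere in $\sC_r$. Since $\sS_r \subseteq \sC_r$ already gives $\max_{\br \in \sS_r} g(\br) \le \max_{\br \in \sC_r} g(\br)$, the entire content of the lemma is the reverse inequality: that no interior point can strictly exceed every surface point. The $\min(\cdot,\pi)$ then merely accounts for the fact that an angular distance never exceeds $\pi$.

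First I would eliminate the reference rotation using the angle-preserving property of $SO(3)$, writing $g(\br) = \angle(\bR_{\br_0}^{\transpose}\bR_{\br}\bp,\, \bp)$. Expressing the composite rotation $\bR_{\br_0}^{\transpose}\bR_{\br}$ in angle-axis form with rotation angle $\beta$, and letting $\alpha$ be the angle between its axis and $\bp$, a short computation yields the closed form
\begin{equation}
\sin\!\big(g/2\big) = \sin\alpha \, \sin\!\big(\beta/2\big),
\end{equation}
which is monotonically non-decreasing in both $\alpha \in [0,\tfrac{\pi}{2}]$ and $\beta \in [0,\pi]$. This isolates the two geometric quantities that drive the displacement and formalises the stated intuition: when the axis is parallel to $\bp$ (so $\alpha = 0$) the point is not displaced at all, regardless of $\beta$.

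Next I would establish radial monotonicity from the centre: along any ray $\br(t) = \br_0 + t\bv$ with $\|\bv\|=1$ and $t \ge 0$, the map $t \mapsto g(\br(t))$ is non-decreasing until $g$ reaches $\pi$. Granting this, every interior $\br \in \sC_r$ lies on the ray from $\br_0$ to some surface point $\br_s \in \sS_r$ with $g(\br) \le g(\br_s) \le \max_{\br \in \sS_r} g(\br)$, and the $\min(\cdot,\pi)$ caps the bound at the maximal attainable angular distance, completing the argument.

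The hard part will be the radial-monotonicity step, because the composite rotation $\bR_{\br_0}^{\transpose}\bR_{\br(t)}$ admits no additive angle-axis parametrisation in $t$: both $\beta(t)$ and $\alpha(t)$ vary nonlinearly as $t$ grows, so monotonicity of $g$ does not follow termwise from the closed form above. I would attack it by differentiating $g$ along the ray through the matrix exponential map (whose directional derivative involves $[\bv]_{\times}$) and showing the derivative is non-negative, or, failing a clean global derivative argument, by ruling out interior stationary points of $g$ so that the maximum must lie on $\partial\sC_r$. A secondary subtlety is the angle range: for a large cube a ray can carry $g$ up to $\pi$ before exiting, but the explicit $\min(\cdot,\pi)$ in the statement, together with $g(\br) \le \|\br - \br_0\|$ from Lemma~\ref{lm:rotation_inequality} (itself bounded by the weak estimate $\sqrt{3}\delta_r$ of Lemma~\ref{lm:rotation_uncertainty_angle_weak}), keeps the bound valid; I would verify that the monotone regime covers the sub-cube or otherwise defer to the $\pi$ cap.
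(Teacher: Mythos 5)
You take essentially the same route as the paper: its proof of \eqref{eqn:rotation_uncertainty_angle} consists of bounding by the maximum over the solid cube and then asserting equality with the surface maximum via an ``order-preserving mapping, with respect to the radial angle'' from the cube of angle-axis vectors to the spherical patch --- which is precisely your radial-monotonicity claim --- with the details deferred to an appendix. Your preliminary reductions are sound: rotation invariance correctly eliminates $\bR_{\br_0}$, and the closed form $\sin(g/2)=\lvert\sin\alpha\rvert\,\sin(\beta/2)$ is correct and cleanly formalises the axis-parallel intuition that motivates the lemma.

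The gap is the one you flag yourself: the radial step is never proved, and moreover, as you state it (``non-decreasing until $g$ reaches $\pi$'') it is false in general, so neither of your proposed attacks can succeed verbatim. First, for $\lVert\br\rVert>\pi$ the factor $\sin(\lVert\br\rVert/2)$ is decreasing, and such vectors do occur in the domain --- $\Omega_r$ is the cube circumscribing the $\pi$-ball, with corners of norm $\sqrt{3}\pi$ --- so already for $\br_0=\mathbf{0}$ (where $\alpha$ is constant along rays) $g$ decreases radially beyond the $\pi$-sphere without ever having reached $\pi$. Second, for $\br_0\neq\mathbf{0}$ the relative rotation $\bR_{\br_0}^{\transpose}\bR_{\br_0+t\bv}$ is not a one-parameter subgroup: both $\alpha(t)$ and $\beta(t)$ vary, and the relative axis can sweep through the direction of $\bp$, at which point $g$ dips to $0$ mid-ray, so $g$ can decrease and then increase. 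This also defeats your fallback of ``ruling out interior stationary points'': interior stationary points genuinely exist (those dips are interior minima); what must be excluded is an interior local \emph{maximum} exceeding all surface values, i.e.\ a weaker endpoint-maximality property of $g$ on each segment from $\br_0$ (radial quasi-convexity), which is what the paper's centre-projection argument is actually driving at. Finally, your hope of ``deferring to the $\pi$ cap'' cannot close the hole: since $\max_{\br\in\sS_r}\angle(\bR_{\br}\bp,\bR_{\br_0}\bp)\leqslant\pi$ always, the $\min(\cdot,\pi)$ in \eqref{eqn:rotation_uncertainty_angle} never loosens the bound (unlike in Lemma~\ref{lm:rotation_uncertainty_angle_weak}, where it is active), so the surface-reduction step carries the entire weight of the lemma and remains unestablished in your write-up.
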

\begin{proof}
Inequality (\ref{eqn:rotation_uncertainty_angle}) can be derived as follows:
\begin{align}
\angle(\bR_\br \bp, \bR_{\br_0} \bp)
&\leqslant \min(\max_{\br \in \sC_{r}}\angle(\bR_{\br} \bp, \bR_{\br_0} \bp),\pi)
\label{eqn:rotation_uncertainty_angle_1}\\
&= \min(\max_{\br \in \sS_{r}}\angle(\bR_{\br} \bp, \bR_{\br_0} \bp),\pi)
\label{eqn:rotation_uncertainty_angle_2}
\end{align}
where (\ref{eqn:rotation_uncertainty_angle_2}) is a consequence of the order-preserving mapping, with respect to the radial angle, from the convex cube of angle-axis vectors to the spherical surface patch (see Figure \ref{fig:uncertainty_angle_rotation}), since the mapping is obtained by projecting from the centre of the sphere to the surface of the sphere. See the appendix for further details.
\end{proof}


The uncertainty angle due to translation can be bounded by observing that the translated points form a cube (Figure~\ref{fig:uncertainty_angle_translation}). When the cube does not contain the origin, the angle can be found by maximising over the cube vertices.
\begin{lemma}
\label{lm:translation_uncertainty_angle}
(Translation uncertainty angle) Given a 3D point $\bp$ and a translation cube $\sC_t$ centred at $\bt_0$ with vertices $\sV_t$, then $\forall \bt \in \sC_t$,
\begin{align}
\label{eqn:translation_uncertainty_angle}
\angle(\bp - \bt, \bp - \bt_0) &\leqslant
\begin{dcases}
\max_{\bt \in \sV_t}\angle(\bp - \bt, \bp - \bt_0) & \text{if } \bp \notin \sC_t\\
\pi & \text{else}
\end{dcases}\nonumber\\
&\defeq \psi_{t}(\bp, \sC_t).
\end{align}
\end{lemma}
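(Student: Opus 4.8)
The plan is to change variables so that the angle is measured from the centre of a cube to its points. As $\bt$ ranges over $\sC_t = \sC(\bt_{0},\boldsymbol{\delta}_t)$, the vector $\bp-\bt$ ranges over the cube $\sC(\bp-\bt_{0},\boldsymbol{\delta}_t)$, whose centre is $\bp-\bt_{0}$ and whose vertex set is exactly $\{\bp-\bt : \bt\in\sV_t\}$, since the map $\bt\mapsto\bp-\bt$ is an isometry carrying the centre to the centre and vertices to vertices. Because $\bO\in\sC(\bp-\bt_{0},\boldsymbol{\delta}_t)$ holds if and only if $\bp\in\sC_t$, the hypothesis $\bp\notin\sC_t$ is equivalent to this cube not containing the origin; then every $\bp-\bt$ with $\bt\in\sC_t$ is nonzero and the reference direction $\bp-\bt_{0}$ is nonzero, so all angles are well defined. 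The problem therefore reduces to showing that, for a cube not containing the origin, the maximum over the cube of the angle subtended at $\bO$ between its centre and an arbitrary point is attained at a vertex. The $\bp\in\sC_t$ branch requires no argument, since $\angle(\cdot,\cdot)\le\pi$ always.

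The key step is a monotonicity argument on the sphere showing that $\bv\mapsto\angle(\bp-\bt_{0},\bv)$ has no maximum in the relative interior of any line segment contained in the cube. I would fix such a segment $[\bv_1,\bv_2]$; since $\bO$ lies on neither the segment nor (outside the degenerate case) its supporting line, the radial projections of its points trace a monotone great-circle arc. Writing $d(\phi)$ for the angular distance from the fixed direction $\bp-\bt_{0}$ to the point at arc-length $\phi$ from the foot of the perpendicular dropped from $\bp-\bt_{0}$ onto that great circle, a short computation gives $\cos d(\phi)=\cos\beta\cos\phi$, where $\beta\in[0,\tfrac{\pi}{2}]$ is the distance to the foot. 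Hence $d$ increases monotonically in $|\phi|$, so its maximum over the arc, and thus the angle over the segment, is attained at an endpoint.

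With the segment claim established, a standard descent over the face lattice of the cube finishes the argument: any maximiser lying in the relative interior of a positive-dimensional face can be moved to the boundary of that face along a segment without decreasing the angle, and iterating this reaches a vertex. Because the vertices of $\sC(\bp-\bt_{0},\boldsymbol{\delta}_t)$ are precisely the images of $\sV_t$, this yields $\max_{\bt\in\sC_t}\angle(\bp-\bt,\bp-\bt_{0})=\max_{\bt\in\sV_t}\angle(\bp-\bt,\bp-\bt_{0})$, which in particular gives the stated inequality.

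I expect the main obstacle to be the monotonicity step: proving rigorously that the angle restricted to a segment is maximised at an endpoint, i.e. establishing the identity $\cos d=\cos\beta\cos\phi$ together with the bound $\beta\le\tfrac{\pi}{2}$, and separately disposing of the degenerate configurations in which the supporting line of $[\bv_1,\bv_2]$ is radial, so that the great-circle arc collapses to a single point (parallel endpoints, angle constant) and the conclusion holds trivially. The convexity of the cube and the exclusion of the origin are exactly the ingredients that forbid an interior maximum.
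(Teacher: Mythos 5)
Your reduction, the treatment of the $\bp \in \sC_t$ branch, and the descent over the face lattice are all fine, and your instinct that the monotonicity step is the crux is exactly right --- but that step fails, and in a way that cannot be repaired, because the lemma itself is false as stated. From $\cos d(\phi) = \cos\beta\cos\phi$ you correctly deduce that $d$ increases with $|\phi|$, but you then conclude that the maximum of $d$ over the projected arc is attained at an endpoint. That only follows if $|\phi|$ is maximised at an endpoint of the arc, which fails when the arc contains the point $\phi = \pi$ --- the antipode of the foot of the perpendicular --- in its relative interior; there $d$ attains an \emph{interior} maximum $\pi - \beta$. Nothing in the hypotheses (cube centred at $\bp - \bt_0$, origin outside the cube) excludes this once the subtended angles exceed $\pi/2$. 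Concretely, take $\bt_0 = \mathbf{0}$, $\sC_t = [-1,1]^3$ and $\bp = (1.01,\, 0.5,\, 0.4) \notin \sC_t$. For $\bt^* = (1,\, 1,\, 0.817) \in \sC_t$ one gets $\bp - \bt^* = (0.01,\, -0.5,\, -0.417)$ and $\angle(\bp - \bt^*, \bp - \bt_0) \approx 121.5^\circ$, whereas the best vertex, $\bt = (1,1,1)$ with $\bp - \bt = (0.01, -0.5, -0.6)$, gives only $\approx 120.9^\circ$ (the remaining seven vertices give angles below $90^\circ$). Along the cube edge $\{(1,1,z) : z \in [-1,1]\}$ the angle rises from $120.9^\circ$ at $z = 1$ to an interior maximum near $z \approx 0.82$ before falling, which is precisely the antipode-in-the-arc configuration your argument overlooks. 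So the supremum over $\sC_t$ strictly exceeds the vertex maximum, and no proof can close your gap in general.

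For comparison, the paper's own proof shares the same defect: it appeals to ``the convexity of the angle function in this domain'' and to the claim that the farthest point of the projected spherical hexagon from the projected centre is a vertex, but $\bv \mapsto \angle(\bp - \bt_0, \bv)$ is not convex on origin-free cubes once its values exceed $\pi/2$ (the edge above is a counterexample), and the spherical farthest-point claim is valid only within the geodesic ball of radius $\pi/2$, where spherical distance is convex along geodesics. Both arguments --- yours and the paper's --- do become correct under the extra hypothesis that the vertex maximum is at most $\pi/2$: since $\bt \mapsto (\bp - \bt)\cdot(\bp - \bt_0)$ is affine, its minimum over $\sC_t$ is attained at a vertex, so a vertex maximum $\leqslant \pi/2$ forces every $\bp - \bt$ into the half-space $\{\bv : \bv \cdot (\bp - \bt_0) \geqslant 0\}$; all angles are then $\leqslant \pi/2$, and the antipode of the foot (at distance $\pi - \beta \geqslant \pi/2$) can lie on the arc only in the degenerate case $\beta = \pi/2$, where $d$ is constant. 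A sound repair is therefore to set $\psi_t(\bp, \sC_t)$ to the vertex maximum when $\bp \notin \sC_t$ \emph{and} that maximum is at most $\pi/2$, and to $\pi$ otherwise. This is not merely cosmetic: in the counterexample the stated bound underestimates the true uncertainty by about $0.6^\circ$, comparable to the inlier thresholds of $1^\circ$--$2^\circ$ used in the experiments, and an underestimated $\psi_t$ in principle invalidates the upper bound of Theorem~\ref{thm:upper_bound} and hence the pruning step that underpins the global-optimality guarantee.
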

\begin{proof}
Observe that for $\bp \in \sC_t$, the cube containing all translated points $\bp - \bt$ also contains the origin. Therefore $\bp - \bt$ can be proportional to $-(\bp - \bt_0)$ and thus the maximum angle is $\pi$. For $\bp \notin \sC_t$,
\begin{align}
\angle(\bp - \bt, \bp - \bt_0) &\leqslant \max_{\bt \in \sC_t}\angle(\bp - \bt, \bp - \bt_0)
\label{eqn:translation_uncertainty_angle_1}\\
&= \max_{\bt \in \sV_t}\angle(\bp - \bt, \bp - \bt_0)
\label{eqn:translation_uncertainty_angle_2}
\end{align}
where (\ref{eqn:translation_uncertainty_angle_2}) follows from the convexity of the angle function in this domain. The maximum of a convex function over a convex set must occur at one of its extreme points (the vertices). Geometrically, the cube $\bp - \bt$ projects to a spherical hexagon on the unit sphere. The maximum geodesic from a point in the hexagon to any other is to a vertex.
\end{proof}

To avoid the non-physical case where a 3D point is located within a very small value $\zeta$ of the camera centre we restrict the translation domain such that $\Omega_t^{\prime} = \Omega_t \cap \{\bt \in \mathbb{R}^3 \mid \|\bp - \bt\| \geqslant \zeta, \forall \bp \in \sP\}$.

The translation bound from \cite{brown2015globally} encloses a translation cube with a sphere of radius $\rho_t = \sqrt{3} \delta_t$ and is given by
\begin{equation}
\label{eqn:translation_uncertainty_angle_weak}
\psi_{t}^{w}(\bp, \sC_t) \defeq
\begin{cases}
\arcsin\left(\frac{\rho_t}{\| \bp - \bt_0 \|}\right) & \text{if } \rho_t \leqslant \| \bp - \bt_0 \|\\
\pi & \text{else.}
\end{cases}
\end{equation}
Our bound is tighter with a maximum difference of $117^{\circ}$ for cubes and greater for cuboids. Figure~\ref{fig:translation_bound_comparisons} compares both translation bounds across a range of values.

\begin{figure}[!t]
\centering
\begin{subfigure}[]{1.74in}
\includegraphics[trim=0 0 0 0, clip, height=1.36in]{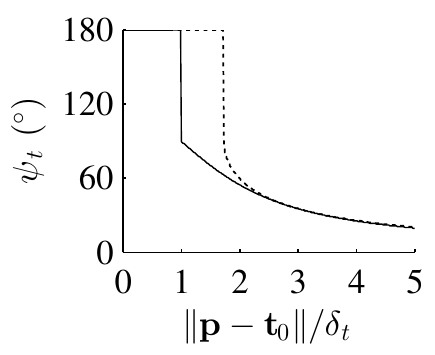}
\caption{Ray through face centre}
\label{fig:translation_bound_comparisons_1}
\end{subfigure}
\hfill
\begin{subfigure}[]{1.5in}
\includegraphics[trim=0 0 0 0, clip, height=1.36in]{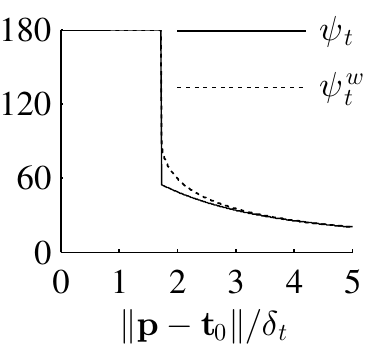}
\caption{Ray through vertex}
\label{fig:translation_bound_comparisons_2}
\end{subfigure}
\caption{Comparison of translation bounds when the cube centre lies along a ray from the origin towards (\subref{fig:translation_bound_comparisons_1}) any face centre and (\subref{fig:translation_bound_comparisons_2}) any vertex. Our bound $\psi_t$ is tighter across the entire domain.}
\label{fig:translation_bound_comparisons}
\end{figure}


The preceding lemmas are used to bound the objective function (\ref{eqn:inlier_function}) within a transformation domain $\sC_r\times \sC_t$. For brevity, we use the notation $\bp_{\bt}^{\br} \defeq \bR_{\br}(\bp - \bt)$, $\bp_{\bt} \defeq \bp - \bt$ and $\bbf_{\br} \defeq (\bR_{\br})^{-1}\bbf$.
\begin{theorem}
\label{thm:lower_bound}
(Lower bound) For the domain $\sC_r \times \sC_t$ centred at $(\br_0, \bt_0)$, the lower bound of the inlier set cardinality can be chosen as
\begin{equation}
\label{eqn:lower_bound}
\ubar{f} (\bR_{\br}, \bt) \defeq f (\bR_{\br_0}, \bt_0).
\end{equation}
\end{theorem}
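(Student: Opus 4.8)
The plan is to invoke the elementary branch-and-bound principle that, for a maximisation problem, the objective evaluated at \emph{any} feasible point of a sub-domain is automatically a valid lower bound on the maximum over that sub-domain. The quantity we must bound from below is $\max_{\br \in \sC_r,\,\bt \in \sC_t} f(\bR_\br, \bt)$, and the candidate $\ubar{f}$ is simply $f$ evaluated at the centre $(\br_0, \bt_0)$ of the cuboid.

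First I would confirm that the centre is feasible. Substituting $\bx = \mathbf{c}$ into the cube definition (\ref{eqn:subcube_definition}) gives $\be_i\transpose(\mathbf{c} - \mathbf{c}) = 0 \in [-\delta_i, \delta_i]$ for each $i$ (each half-width being non-negative by construction of the octree subdivision), so $\br_0 \in \sC_r$ and $\bt_0 \in \sC_t$, hence $(\br_0, \bt_0) \in \sC_r \times \sC_t$. The conclusion is then immediate, since the maximum of a function over a set is no smaller than its value at any single element of that set:
\[
f(\bR_{\br_0}, \bt_0) \;\leqslant\; \max_{\br \in \sC_r,\,\bt \in \sC_t} f(\bR_\br, \bt),
\]
which is exactly the defining property required of the lower bound $\ubar{f}$.

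There is no real obstacle here, and this is the point worth emphasising: in contrast to the forthcoming upper bound, which must hold \emph{uniformly} over the whole cuboid and therefore needs the uncertainty-angle machinery of Lemmas~\ref{lm:rotation_uncertainty_angle} and~\ref{lm:translation_uncertainty_angle} to inflate each inlier cone by $\psi_r + \psi_t$, the lower bound is cheap because a single feasible evaluation suffices. The only care needed is in stating feasibility of the centre; everything else is a one-line inequality. For the later convergence analysis (outside the scope of this statement) it is also useful to note that this choice makes $\ubar{f}$ exact at $(\br_0,\bt_0)$, so that the lower and upper bounds meet as the cuboid half-widths $\boldsymbol{\delta}_r, \boldsymbol{\delta}_t$ shrink to $\mathbf{0}$.
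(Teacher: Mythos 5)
Your proposal is correct and follows the same argument as the paper: the value of $f$ at the domain centre $(\br_0, \bt_0)$ is trivially a lower bound on the maximum of $f$ over $\sC_r \times \sC_t$. The only addition is your explicit check that the centre is feasible, which the paper leaves implicit; otherwise the two proofs coincide.
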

\begin{proof}
The validity of the lower bound follows from
\begin{equation}
\max_{\br,\,\bt} f(\bR_{\br}, \bt) \geqslant f (\bR_{\br_0}, \bt_0).
\label{eqn:lower_bound_1}
\end{equation}
That is, the function value at a specific point within the domain is less than or equal to the maximum.
\end{proof}

\begin{theorem}
\label{thm:upper_bound}
(Upper bound) For the domain $\sC_r \times \sC_t$ centred at $(\br_0, \bt_0)$, the upper bound of the inlier set cardinality can be chosen as
\begin{align}
\label{eqn:upper_bound}
\bar{f} (\bR_{\br}, \bt) \!&\defeq\!\! \sum_{\bbf \in \sF} \max_{\bp \in \sP} \indfn\big(\theta \!-\!\! \angle(\bbf, \bp_{\bt_0}^{\br_0}) \!+\! \psi_r(\bbf, \sC_r) \!+\! \psi_t(\bp, \sC_t)\big).
\end{align}
\end{theorem}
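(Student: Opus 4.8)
The plan is to show that $\bar{f}$ dominates $f$ pointwise over the whole branch: I would prove $f(\bR_{\br},\bt)\leqslant\bar{f}(\bR_{\br},\bt)$ for every $(\br,\bt)\in\sC_r\times\sC_t$, from which $\max_{\sC_r\times\sC_t} f\leqslant\bar{f}$ follows immediately, since the right-hand side is constant over the branch (it depends only on $\br_0$, $\bt_0$ and the uncertainty angles). Because $f$ and $\bar{f}$ share the same outer sum over $\sF$ and inner maximisation over $\sP$ of indicator terms, it suffices to reason term-by-term: for a fixed $\bbf$, if some $\bp$ makes $\bbf$ an inlier at $(\br,\bt)$, i.e. $\angle(\bbf,\bp_{\bt}^{\br})\leqslant\theta$, then the \emph{same} $\bp$ forces the relaxed indicator in $\bar{f}$ to equal one. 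Monotonicity of $\max_{\bp}$ and $\sum_{\bbf}$ then lifts the inequality to the full objective. Keeping $\psi_t(\bp,\sC_t)$ inside the inner maximisation is essential, since it must be evaluated at the very $\bp$ that witnesses the inlier.

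The core implication to establish is
\[
\angle(\bbf,\bp_{\bt}^{\br})\leqslant\theta\ \Longrightarrow\ \angle(\bbf,\bp_{\bt_0}^{\br_0})\leqslant\theta+\psi_r(\bbf,\sC_r)+\psi_t(\bp,\sC_t).
\]
I would obtain this by controlling how far the transformed point swings as the transformation moves from the branch centre $(\br_0,\bt_0)$ to an arbitrary $(\br,\bt)$. The key manoeuvre is to use the rotation-invariance of angular distance to move the rotation off the point and onto the bearing vector, writing $\angle(\bbf,\bp_{\bt_0}^{\br_0})=\angle(\bbf_{\br_0},\bp_{\bt_0})$ and $\angle(\bbf,\bp_{\bt}^{\br})=\angle(\bbf_{\br},\bp_{\bt})$. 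Inserting the intermediate directions $\bbf_{\br}$ and $\bp_{\bt}$ and applying the triangle inequality for angular distance twice gives
\[
\angle(\bbf_{\br_0},\bp_{\bt_0})\leqslant\angle(\bbf_{\br_0},\bbf_{\br})+\angle(\bbf_{\br},\bp_{\bt})+\angle(\bp_{\bt},\bp_{\bt_0}).
\]
I would then bound the three summands: the first by the rotation uncertainty angle $\psi_r(\bbf,\sC_r)$ of the \emph{fixed} vector $\bbf$ (Lemma~\ref{lm:rotation_uncertainty_angle}), the middle by the inlier hypothesis $\theta$, and the last by the translation uncertainty angle $\psi_t(\bp,\sC_t)$ (Lemma~\ref{lm:translation_uncertainty_angle}, using that the restricted domain keeps $\bp-\bt$ well-defined). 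Summing reproduces exactly the relaxed threshold, so the $\bar{f}$ indicator argument is non-negative.

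The main obstacle I anticipate is the \emph{routing} of the rotation uncertainty onto $\bbf$ rather than onto $\bp-\bt_0$. A naive decomposition of $\angle(\bp_{\bt}^{\br},\bp_{\bt_0}^{\br_0})$ that keeps the rotation acting on the point yields $\psi_r(\bp-\bt_0,\sC_r)$, which is point-dependent and cannot be precomputed once per bearing vector; the invariance rewriting is precisely what makes the rotation term depend only on $\bbf$ and the cube, which is both cheaper to evaluate and, aggregated over the inner $\max_{\bp}$, the form the theorem asserts. A secondary point requiring care is that Lemma~\ref{lm:rotation_uncertainty_angle} is stated for the forward rotations $\bR_{\br}\bbf$ about the cube centred at $\br_0$, whereas $\angle(\bbf_{\br_0},\bbf_{\br})$ involves the inverse rotations $\bR_{\br}^{-1}=\bR_{-\br}$; I would verify that the uncertainty angle is unchanged under this negation of the rotation cube (it is immediate for the weak bound $\sqrt{3}\delta_r$ via Lemma~\ref{lm:rotation_inequality} since $\|{-\br}-({-\br_0})\|=\|\br-\br_0\|$, and the surface maximisation of Lemma~\ref{lm:rotation_uncertainty_angle} must be checked to be symmetric under $\br\mapsto-\br$), so that $\psi_r(\bbf,\sC_r)$ remains a valid bound. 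Finally, the degenerate cases where $\psi_r$ or $\psi_t$ saturate at $\pi$ need only be remarked upon, as the relaxed indicator is then trivially one and the bound holds vacuously.
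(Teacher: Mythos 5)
Your proposal is correct and follows essentially the same route as the paper's proof: the paper likewise uses the notation $\bbf_{\br} \defeq (\bR_{\br})^{-1}\bbf$ to route the rotation uncertainty onto the bearing vector, applies the spherical triangle inequality (twice, via Figure~\ref{fig:spherical_triangle}) to obtain $\angle(\bbf, \bp_{\bt}^{\br}) \geqslant \angle(\bbf, \bp_{\bt_0}^{\br_0}) - \angle(\bbf_{\br}, \bbf_{\br_0}) - \angle(\bp_{\bt}, \bp_{\bt_0})$, bounds the two error terms by $\psi_r(\bbf,\sC_r)$ and $\psi_t(\bp,\sC_t)$ via Lemmas~\ref{lm:rotation_uncertainty_angle} and~\ref{lm:translation_uncertainty_angle}, and substitutes into (\ref{eqn:inlier_function}), which is exactly your pointwise-domination argument. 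The inverse-rotation subtlety you flag is genuine and the paper silently glosses over it; it resolves by applying Lemma~\ref{lm:rotation_uncertainty_angle} to the negated cube (equivalently, to the rotated vector $\bbf_{\br_0}$, since $\angle(\bbf_{\br},\bbf_{\br_0}) = \angle(\bR_{\br}\bbf_{\br_0}, \bR_{\br_0}\bbf_{\br_0})$), so on this point you are more careful than the published argument.
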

\begin{proof}
Observe that $\forall(\br,\bt) \in (\sC_r \times \sC_t)$,
\begin{align}
\angle(\bbf, \bp_{\bt}^{\br}) &\geqslant \angle(\bbf, \bp_{\bt_0}^{\br_0}) - \angle(\bbf_{\br}, \bbf_{\br_{0}}) - \angle(\bp_{\bt}, \bp_{\bt_0})
\label{eqn:upper_bound_1}\\
&\geqslant \angle(\bbf, \bp_{\bt_0}^{\br_0}) - \psi_r(\bbf, \sC_r) - \psi_t(\bp, \sC_t)
\label{eqn:upper_bound_2}
\end{align}
where (\ref{eqn:upper_bound_1}) follows from the triangle inequality in spherical geometry (see Figure \ref{fig:spherical_triangle}) and (\ref{eqn:upper_bound_2}) follows from Lemmas \ref{lm:rotation_uncertainty_angle} and \ref{lm:translation_uncertainty_angle}. Substituting (\ref{eqn:upper_bound_2}) into (\ref{eqn:inlier_function}) completes the proof.
\end{proof}

\begin{figure}[!t]
\centering
\def\svgwidth{0.6\columnwidth}
\input{./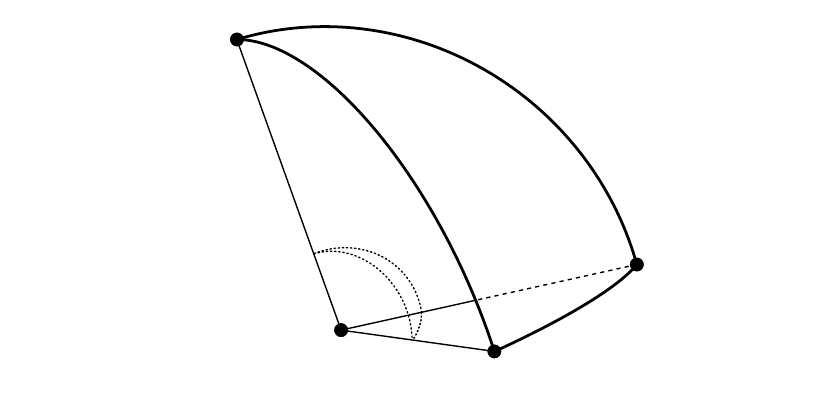_tex}
\caption{The triangle inequality in spherical geometry, given by $\beta \leqslant \alpha + \gamma$ or $\angle(\bbf_{\br}, \bp_{\bt_0}) \leqslant \angle(\bbf_{\br}, \bp_{\bt}) + \angle(\bp_{\bt}, \bp_{\bt_0})$. The transformed points have been normalised to lie on the unit sphere.}
\label{fig:spherical_triangle}
\end{figure}

By inspecting the translation component of Theorem~\ref{thm:upper_bound}, a tighter upper bound may be found by removing one of the two applications of the triangle inequality. A similar approach cannot be taken for the rotation component since $\bR_{\br}\bp$ is a complex surface due to the nonlinear conversion from angle-axis to rotation matrix representations. To reduce computation, it is only necessary to evaluate this tighter bound when $\angle(\bbf, \bp_{\bt_0}^{\br_0}) \leqslant \theta + \psi_r(\bbf, \sC_r) + \psi_t(\bp, \sC_t)$, since otherwise the point is definitely an outlier and does not need to be investigated further.
\begin{theorem}
\label{thm:upper_bound_tighter}
(Tighter upper bound) For the domain $\sC_r \times \sC_t$ centred at $(\br_0, \bt_0)$, the upper bound of the inlier set cardinality can be chosen as
\begin{equation}
\label{eqn:upper_bound_tighter}
\bar{f} (\bR_{\br}, \bt) \defeq \sum_{\bbf \in \sF} \max_{\bp \in \sP} \Gamma(\bbf,\bp)
\end{equation}
\begin{equation}
\label{eqn:upper_bound_tighter_gamma}
\Gamma(\bbf, \bp) = \max_{\bt \in \sC_t}\indfn\big(\theta - \angle(\bbf, \bp_{\bt}^{\br_0}) + \psi_r(\bbf, \sC_r)\big).
\end{equation}
\end{theorem}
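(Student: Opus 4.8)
The plan is to prove validity as an upper bound by showing that $\bar{f}$ dominates $f(\bR_{\br}, \bt)$ at \emph{every} point of the domain, so that in particular $\bar{f} \geqslant \max_{(\br, \bt) \in \sC_r \times \sC_t} f(\bR_{\br}, \bt)$, which is exactly the defining property of an upper bound for branch-and-bound. Mirroring the structure of the proof of Theorem~\ref{thm:upper_bound}, I would fix an arbitrary $(\br, \bt) \in \sC_r \times \sC_t$ together with an arbitrary pair $\bbf \in \sF$, $\bp \in \sP$, and establish the single pointwise inequality $\indfn(\theta - \angle(\bbf, \bp_{\bt}^{\br})) \leqslant \Gamma(\bbf, \bp)$. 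Once this holds, taking $\max_{\bp \in \sP}$ and then $\sum_{\bbf \in \sF}$ preserves the inequality and yields $f(\bR_{\br}, \bt) \leqslant \bar{f}$; maximising the left-hand side over the domain then finishes the argument, since the right-hand side $\bar{f}$ depends only on $\sC_r$ and $\sC_t$, not on the chosen $(\br, \bt)$.

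The decisive departure from Theorem~\ref{thm:upper_bound} is to invoke the spherical triangle inequality only once, decoupling the rotation while leaving the translation intact inside the angle. Using the rotation-invariance of the angle, I would rewrite $\angle(\bbf, \bp_{\bt}^{\br}) = \angle(\bbf_{\br}, \bp_{\bt})$ and apply the triangle inequality to the triple $\bbf_{\br_0}, \bbf_{\br}, \bp_{\bt}$ to get $\angle(\bbf_{\br}, \bp_{\bt}) \geqslant \angle(\bbf_{\br_0}, \bp_{\bt}) - \angle(\bbf_{\br}, \bbf_{\br_0})$. The residual rotation term is controlled exactly as before, $\angle(\bbf_{\br}, \bbf_{\br_0}) \leqslant \psi_r(\bbf, \sC_r)$, by Lemma~\ref{lm:rotation_uncertainty_angle}. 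Re-applying rotation-invariance converts $\angle(\bbf_{\br_0}, \bp_{\bt}) = \angle(\bbf, \bp_{\bt}^{\br_0})$, giving $\angle(\bbf, \bp_{\bt}^{\br}) \geqslant \angle(\bbf, \bp_{\bt}^{\br_0}) - \psi_r(\bbf, \sC_r)$ with the \emph{same} translation $\bt$ surviving on both sides — the second triangle inequality of Theorem~\ref{thm:upper_bound}, which would have replaced $\bt$ by $\bt_0$ and introduced the slack $\psi_t$, is never used.

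Because $\indfn$ is nondecreasing, this lower bound on the angle transfers to $\indfn(\theta - \angle(\bbf, \bp_{\bt}^{\br})) \leqslant \indfn(\theta - \angle(\bbf, \bp_{\bt}^{\br_0}) + \psi_r(\bbf, \sC_r))$. The crucial bookkeeping step is that the chosen $\bt$ lies in $\sC_t$, so maximising the right-hand side over all $\bt \in \sC_t$ can only increase it, and the result is precisely $\Gamma(\bbf, \bp)$ as defined in (\ref{eqn:upper_bound_tighter_gamma}). This delivers the required pointwise inequality, and it is exactly this exact maximisation over $\bt$ — rather than the conservative substitution $\bt \mapsto \bt_0$ plus $\psi_t$ — that makes the bound tighter than Theorem~\ref{thm:upper_bound} term by term.

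I expect the main obstacle to be conceptual bookkeeping rather than any hard computation: one must resist decoupling the translation and instead carry $\bt$ untouched through the triangle inequality, so that the $\max_{\bt \in \sC_t}$ inside $\Gamma$ is what absorbs all translational uncertainty, while simultaneously confirming that the single retained triangle inequality — the rotation step $\bbf_{\br_0} \to \bbf_{\br} \to \bp_{\bt}$, as opposed to the discarded translation step drawn in Figure~\ref{fig:spherical_triangle} — remains valid with $\bp_{\bt}$ held at the running translation. A secondary subtlety, which belongs to the algorithm rather than to this proof, is that evaluating $\Gamma$ reduces to a feasibility test of whether the cone of half-angle $\theta + \psi_r(\bbf, \sC_r)$ about $\bbf$ meets the image of $\sC_t$ under $\bt \mapsto \bp_{\bt}^{\br_0}$; this test is needed only to compute the bound, not to certify its validity.
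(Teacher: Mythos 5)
Your proposal is correct and follows essentially the same route as the paper's proof: one application of the spherical triangle inequality to peel off only the rotation (bounded via Lemma~\ref{lm:rotation_uncertainty_angle}), keeping the translation $\bt$ inside the angle and absorbing it with the explicit $\max_{\bt \in \sC_t}$ in $\Gamma$, then substituting the resulting pointwise inequality into (\ref{eqn:inlier_function}). Your write-up merely makes explicit the rotation-invariance rewriting $\angle(\bbf, \bp_{\bt}^{\br}) = \angle(\bbf_{\br}, \bp_{\bt})$ that the paper leaves implicit in its appeal to Figure~\ref{fig:spherical_triangle}.
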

\begin{proof}
Observe that $\forall(\br,\bt) \in (\sC_r \times \sC_t)$,
\begin{align}
\!\!\!\!\!\indfn\big(\theta \!-\! \angle(\bbf, \bp_{\bt}^{\br})\big) &\leqslant \indfn\big(\theta - \angle(\bbf, \bp_{\bt}^{\br_0}) + \angle(\bbf_{\br}, \bbf_{\br_0})\big)
\label{eqn:upper_bound_tighter_1}\\
&\leqslant
\max_{\bt \in \sC_t}\indfn\big(\theta - \angle(\bbf, \bp_{\bt}^{\br_0}) + \psi_r(\bbf, \sC_r)\big)
\label{eqn:upper_bound_tighter_2}
\end{align}
where (\ref{eqn:upper_bound_tighter_1}) follows from the triangle inequality in spherical geometry and (\ref{eqn:upper_bound_tighter_2}) follows from Lemma \ref{lm:rotation_uncertainty_angle} and maximising over $\bt$. Substituting (\ref{eqn:upper_bound_tighter_2}) into (\ref{eqn:inlier_function}) completes the proof.
\end{proof}

$\Gamma$ may be evaluated by observing that the minimum angle between a ray $\bbf$ and a cube $\bp_{\bt}^{\br_0}$ is zero if the ray passes through the cube and is otherwise the angle between the ray and the point on the skeleton of the cube (vertices and edges) with least angular displacement from $\bbf$. Thus, for the translation domain $\sC_t$ with skeleton $\sSk_t$,
\begin{gather}
\label{eqn:upper_bound_tighter_tmax}
\Gamma =
\begin{dcases}
\max_{\bt \in \sSk_t}\indfn\big(\theta - \angle(\bbf, \bp_{\bt}^{\br_0}) + \psi_r\big) & \text{if } \angle(\bbf, \bp_{\bt_0}^{\br_0}) > \psi_{t}\\
1 & \text{else.}
\end{dcases}\raisetag{14pt}
\end{gather}

\section{The GOPAC Algorithm}
\label{sec:algorithm}

The Globally-Optimal Pose And Correspondences (GOPAC) algorithm for a calibrated camera is outlined in Algorithms \ref{alg:gopnp} and \ref{alg:rotation_search}. As in \cite{yang2016goicp}, we employ a nested branch-and-bound structure for computational efficiency. In the outer breadth-first BB search, upper and lower bounds are found for each translation cuboid $\sC_{t} \in \Omega_t$ by running an inner BB search over rotation space $SO(3)$ (denoted RBB). The upper bound $\bar{\nu} \defeq \bar{\nu}_{t}$ (\ref{eqn:upper_bound}) of $\sC_{t}$ is found by running RBB until convergence with the following bounds
\begin{align}
\ubar{\nu}_{r} &\defeq \sum_{\bbf \in \sF} \max_{\bp \in \sP} \indfn\big(\theta - \angle(\bbf, \bp_{\bt_0}^{\br_0}) + \psi_t(\bp)\big)\label{eqn:lower_bound_nested}\\
\bar{\nu}_{r} &\defeq \sum_{\bbf \in \sF} \max_{\bp \in \sP} \indfn\big(\theta - \angle(\bbf, \bp_{\bt_0}^{\br_0}) + \psi_t(\bp) + \psi_r(\bbf)\big).\label{eqn:upper_bound_nested}
\end{align}
The tighter upper bound (\ref{eqn:upper_bound_tighter}) instead uses
\begin{align}
\ubar{\nu}_{r} &\defeq \sum_{\bbf \in \sF} \max_{\bp \in \sP, \bt \in \sC_t} \indfn\big(\theta - \angle(\bbf, \bp_{\bt}^{\br_0})\big)\label{eqn:lower_bound_nested_tighter}\\
\bar{\nu}_{r} &\defeq \sum_{\bbf \in \sF} \max_{\bp \in \sP, \bt \in \sC_t} \indfn\big(\theta - \angle(\bbf, \bp_{\bt}^{\br_0}) + \psi_r(\bbf)\big).\label{eqn:upper_bound_nested_tighter}
\end{align}
The lower bound $\ubar{\nu} \defeq \ubar{\nu}_{t}$ (\ref{eqn:lower_bound}) is found by running RBB using bounds (\ref{eqn:lower_bound_nested}) and (\ref{eqn:upper_bound_nested}) with $\psi_t$ set to zero.

The nested structure has better memory and computational efficiency than directly branching over 6D transformation space, since it maintains a queue for each 3D sub-problem, rather than one for the entire 6D problem. This requires significantly fewer simultaneously enqueued sub-cubes. Moreover, with rotation search nested inside translation search, $\psi_t$ only has to be calculated once per translation~$\bt$, not once per pose $(\br, \bt)$, and $\sF$ can be rotated (by $\bR^{-1}$) instead of $\sP$ which typically has more elements. This makes it possible to precompute the rotated bearing vectors and rotation bounds for the top five levels of the rotation octree to reduce the amount of computation required in the inner BB subroutine.

\begin{algorithm}[!t]
\begin{algorithmic}[1]
\Require bearing vector set $\sF$, point set $\sP$, inlier threshold $\theta$, initial domains $\Omega_r$ and $\Omega_t$

\Ensure optimal number of inliers $\nu^*$, camera pose $(\br^*,\bt^*)$ and 2D--3D correspondences

\State $\nu^* \gets 0$\label{alg:nu_initialisation}
\State Add translation domain $\Omega_t$ to priority queue $Q_t$
\Loop
\State Update greatest upper bound $\bar{\nu}_t$ from $Q_t$\label{alg:global_upper_bound}
\State Get cuboid $\sC_t$ with greatest width $\delta_{tx}$ from $Q_t$\label{alg:priority_queue}
\State \textbf{if} $\nu^* \geqslant \bar{\nu}_t$ \textbf{then} terminate \label{alg:stopping_criterion}
\ForAll{sub-cuboids $\sC_{ti} \in \sC_t$}\label{alg:branching}
\State $(\ubar{\nu}_{ti}, \br) \gets \text{RBB}(\nu^*, \bt_{0i}, \psi_t = 0)$\label{alg:lower_bound}
\State \textbf{if} $\nu^* < 2\ubar{\nu}_{ti}$ \textbf{then} $(\nu^*, \br^*, \bt^*) \gets \text{P}n\text{P}(\br, \bt_{0i})$\label{alg:pnp}\label{alg:update_criterion}
\State $\bar{\nu}_{ti} \gets \text{RBB}(\nu^*, \bt_{0i}, \psi_t)$\label{alg:upper_bound}
\State \textbf{if} $\nu^* < \bar{\nu}_{ti}$ \textbf{then} add $\sC_{ti}$ to queue $Q_t$\label{alg:push_queue}
\EndFor
\EndLoop
\end{algorithmic}
\caption{GOPAC: a branch-and-bound algorithm for globally-optimal camera pose \& correspondence estimation}
\label{alg:gopnp}
\end{algorithm}

\begin{algorithm}[!t]
\begin{algorithmic}[1]
\Require bearing vector set $\sF$, point set $\sP$, inlier threshold $\theta$, initial domain $\Omega_r$, best-so-far cardinality $\nu^*$, translation $\bt_0$, translation uncertainty $\psi_t$

\Ensure optimal number of inliers $\nu_r^*$ and rotation $\bR^*$

\State $\nu_r^* \gets \nu^*$\label{alg:nu_initialisation_2}
\State Add rotation domain $\Omega_r$ to priority queue $Q_r$
\Loop
\State Read cube $\sC_r$ with greatest upper bound $\bar{\nu}_r$ from $Q_r$\label{alg:priority_queue_2}
\State \textbf{if} $\nu_r^* \geqslant \bar{\nu}_r$ \textbf{then} terminate \label{alg:stopping_criterion_2}
\ForAll{sub-cubes $\sC_{ri} \in \sC_r$}\label{alg:branching_2}
\State Calculate $\ubar{\nu}_{ri}$ by (\ref{eqn:lower_bound_nested}) or (\ref{eqn:lower_bound_nested_tighter})\label{alg:lower_bound_2}
\State \textbf{if} $\nu_r^* < \ubar{\nu}_{ri}$ \textbf{then} $\nu_r^* \gets \ubar{\nu}_{ri}, \br^* \gets \br_0$\label{alg:update_criterion_2}
\State Calculate $\bar{\nu}_{ri}$ by (\ref{eqn:upper_bound_nested}) or (\ref{eqn:upper_bound_nested_tighter})\label{alg:upper_bound_2}
\State \textbf{if} $\nu_r^* < \bar{\nu}_{ri}$ \textbf{then} add $\sC_{ri}$ to queue $Q_r$\label{alg:push_queue_2}
\EndFor
\EndLoop
\end{algorithmic}
\caption{RBB: a rotation search subroutine for GOPAC}
\label{alg:rotation_search}
\end{algorithm}

Line \ref{alg:pnp} of Algorithm \ref{alg:gopnp} shows how local optimisation is incorporated to refine the camera pose, in a similar manner to \cite{yang2016goicp,brown2015globally}. Whenever the BB algorithm finds a sub-cube pair $(\sC_r, \sC_t)$ with a greater lower bound $\ubar{\nu}$ than half the best-so-far cardinality $\nu^*$, the P$n$P problem is solved, with correspondences given by the inlier pairs at the pose $(\br_0, \bt_0)$. We use nonlinear optimisation \cite{kneip2014opengv}, minimising the sum of angular distances between corresponding bearing vectors and points, and update $\nu^*$ if a larger $\nu$ is found. In this way, BB and P$n$P collaborate, with P$n$P finding the best pose given correspondences and BB guiding the search for correspondences. P$n$P accelerates convergence since the faster $\nu^*$ is increased, the sooner sub-cubes (with $\bar{\nu} \leqslant \nu^*$) can be culled (Alg. \ref{alg:gopnp} Line~\ref{alg:push_queue}). SoftPOSIT \cite{david2004softposit} is also applied at this stage to jump to the nearest local maxima.

As just observed, a large $\nu^*$ reduces runtime. Therefore, if the user knows a lower bound on the number of 2D inliers, $\nu^*$ can be initialised to this value. However, this is rarely known. Instead, our algorithm implements an optional guess-and-verify approach, without loss of optimality or objective function distortion, which provides especial benefit when 2D outliers are rare: set $\nu^* = n$; run GOPAC; stop if an optimality guarantee is found, otherwise $n \gets \max(n - s, 0)$ and repeat. We initialise $n = N - 1$ and $s = \left\lceil{0.1N}\right\rceil$.

We also provide a multi-threaded implementation, where the initial translation domain is divided into sub-domains and GOPAC is run for each in separate CPU threads. The algorithm returns the largest $\nu^*$ and the associated pose and correspondences. While not supplied, a massively parallel implementation on a GPU is very feasible. Further algorithmic details are provided in the appendix.

\section{Results}
\label{sec:results}

The GOPAC algorithm was evaluated with respect to the baseline RANSAC \cite{fischler1981random}, SoftPOSIT \cite{david2004softposit} and BlindPnP \cite{moreno2008pose} algorithms, denoted GP, RS, SP and BP respectively, with synthetic and real data. The RANSAC approach uses the OpenGV framework \cite{kneip2014opengv} and the P3P algorithm \cite{kneip2011novel} with randomly-sampled correspondences. Since SoftPOSIT and BlindPnP require pose priors to function, we use a torus prior in the synthetic experiments. In general, the space of camera poses is much larger than the restrictive torus prior and a good prior can rarely be known in advance. Except where otherwise specified, the inlier threshold $\theta$ was set to $1^\circ$, the rotation and translation bounds (\ref{eqn:rotation_uncertainty_angle}) and (\ref{eqn:translation_uncertainty_angle}) were used, SoftPOSIT and nonlinear P$n$P refinement were applied and multithreading was not used. It is crucial to observe that finding the global optimum does not necessarily imply finding the ground-truth transformation. There may be multiple global optima, particularly in the case of symmetries, and noise may create false optima.

\subsection{Synthetic Data Experiments}
\label{sec:results_synthetic}
To evaluate our algorithm in a setting where true priors can be applied, we performed 50 independent Monte Carlo simulations per parameter setting, using the framework of \cite{moreno2008pose}: $M$ random 3D points were generated from $[-1,1]^3$; a fraction $\omega_{\text{3D}}$ of the 3D points were randomly selected as outliers to model occlusion; the inliers were projected to a virtual image; normal noise was added with $\sigma=2$ pixels; and random points were added to the image such that a fraction $\omega_{\text{2D}}$ of the 2D points were outliers. To facilitate fair comparison with SoftPOSIT and BlindPnP, we use a pose prior for these experiments. The torus prior constrains the camera centre to a torus around the point-set with the optical axis directed towards the model, as in \cite{moreno2008pose}. BlindPnP represents the poses with a 20 component Gaussian mixture model, the means of which are used to initialise SoftPOSIT, as in \cite{moreno2008pose}. GOPAC is given a set of translation cubes which approximate the torus and is not given the rotation priors.

The results are shown in Figures \ref{fig:random_torus_3doutliers_m} and \ref{fig:random_torus_3d2doutliers}. We repeated the experiments for the repetitive CAD structure shown in Figure \ref{fig:models}, with results shown in Figure \ref{fig:cad_torus_3d2doutliers}. Two success rates are reported: the fraction of trials where the true maximum number of inliers was found and the fraction where the correct pose was found, where the angle between the output rotation and the ground truth rotation is less than $0.1$ radians and the camera centre error $\|\bt - \bt_{\text{GT}}\| / \| \bt_{\text{GT}} \|$ relative to the ground truth $\bt_{\text{GT}}$ is less than $0.1$, as in \cite{moreno2008pose}. The 2D and 3D outlier fractions were fixed to 0 when not being varied and multithreading was used in the 2D outlier experiments. GOPAC outperforms the other methods, reliably finding the global optimum while still being relatively efficient, particularly when the fraction of 2D outliers is low. For the repetitive CAD structure, while GOPAC finds the globally optimal number of inliers in all cases, the pose is occasionally incorrect when $75\%$ of the 3D points are occluded, due to the highly symmetric nature of the model.

\begin{figure}[!t]
\centering
\begin{subfigure}[]{0.25\columnwidth}
\includegraphics[trim=2pt 0pt 4pt 4pt, clip=true, scale=.55]{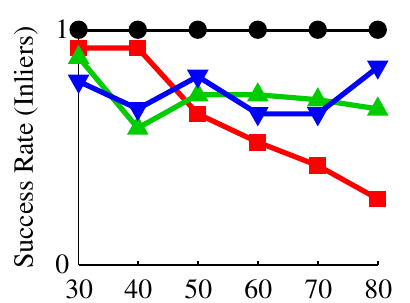}

\includegraphics[trim=2pt 0pt 4pt 4pt, clip=true, scale=.55]{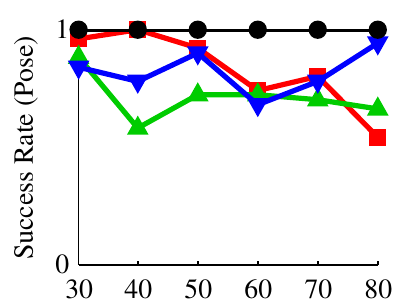}

\includegraphics[trim=2pt 0pt 4pt 4pt, clip=true, scale=.55]{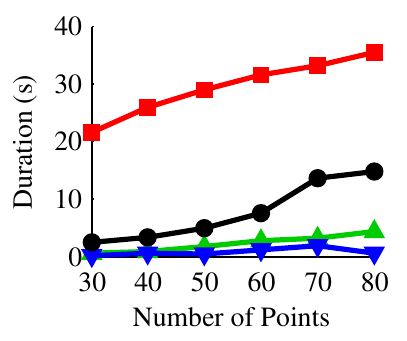}
\vspace{-16pt}
\caption{$\omega_{\text{3D}} = 0$}
\end{subfigure}
\hfill
\begin{subfigure}[]{0.24\columnwidth}
\includegraphics[trim=0pt 0pt 4pt 4pt, clip=true, scale=.55]{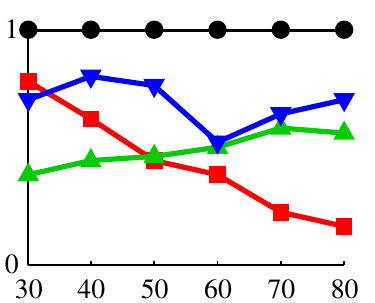}

\includegraphics[trim=0pt 0pt 4pt 4pt, clip=true, scale=.55]{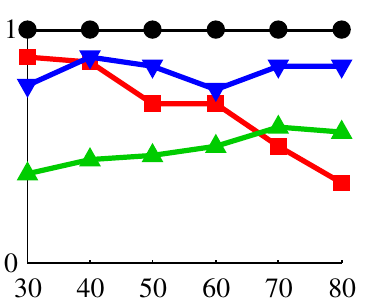}

\includegraphics[trim=0pt 0pt 4pt 4pt, clip=true, scale=.55]{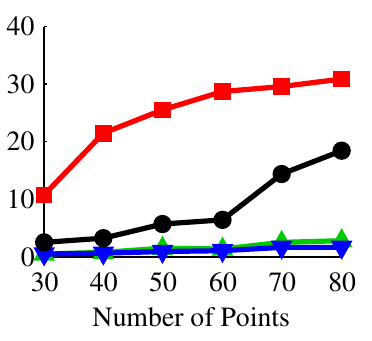}
\vspace{-16pt}
\caption{$\omega_{\text{3D}} = 0.25$}
\end{subfigure}
\hfill
\begin{subfigure}[]{0.24\columnwidth}
\includegraphics[trim=0pt 0pt 4pt 4pt, clip=true, scale=.55]{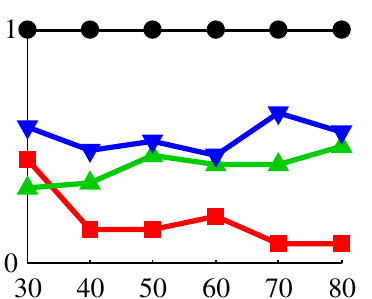}

\includegraphics[trim=0pt 0pt 4pt 4pt, clip=true, scale=.55]{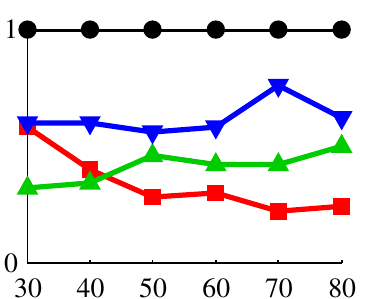}

\includegraphics[trim=0pt 0pt 4pt 4pt, clip=true, scale=.55]{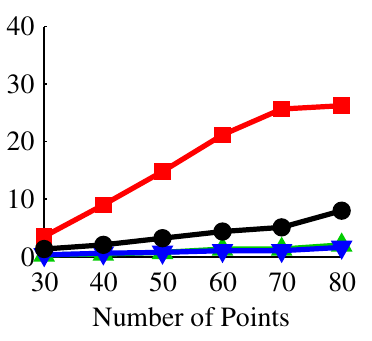}
\vspace{-16pt}
\caption{$\omega_{\text{3D}} = 0.5$}
\end{subfigure}
\hfill
\begin{subfigure}[]{0.24\columnwidth}
\includegraphics[trim=0pt 0pt 4pt 4pt, clip=true, scale=.55]{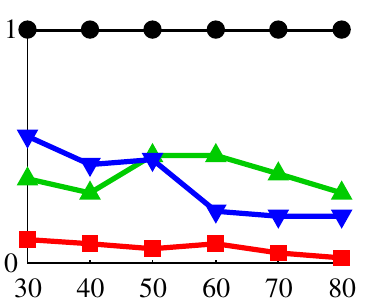}

\includegraphics[trim=0pt 0pt 4pt 4pt, clip=true, scale=.55]{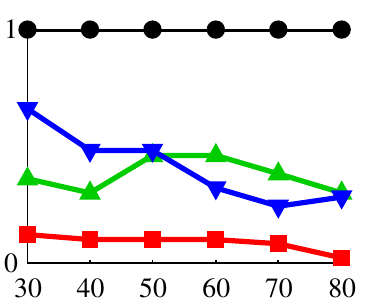}

\includegraphics[trim=0pt 0pt 4pt 4pt, clip=true, scale=.55]{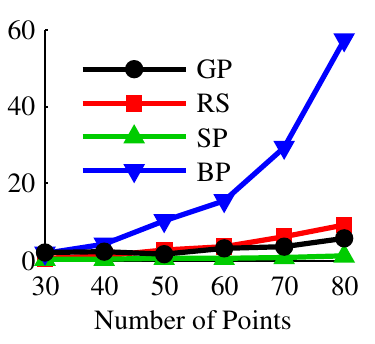}
\vspace{-16pt}
\caption{$\omega_{\text{3D}} = 0.75$}
\end{subfigure}
\caption{Mean success rates and median runtimes with respect to the number of random 3D points and the 3D outlier fraction, for 50 Monte Carlo simulations per parameter value with the torus prior.}
\label{fig:random_torus_3doutliers_m}
\end{figure}

\begin{figure}[!t]
\centering
\begin{subfigure}[]{0.505\columnwidth}
\includegraphics[trim=2pt 0pt 5pt 4pt, clip=true, scale=.55]{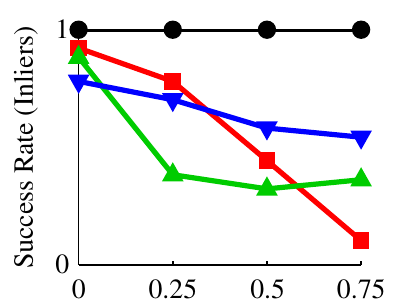}
\hfill
\includegraphics[trim=1pt 0pt 5pt 4pt, clip=true, scale=.55]{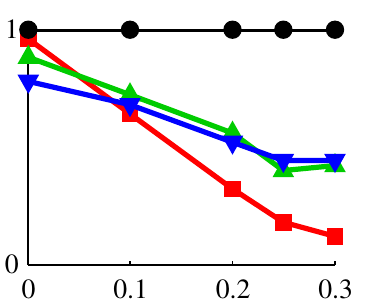}
\end{subfigure}
\hfill
\begin{subfigure}[]{0.485\columnwidth}
\includegraphics[trim=1pt 0pt 5pt 4pt, clip=true, scale=.55]{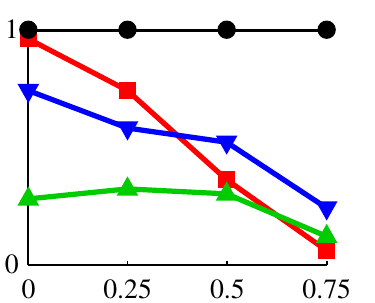}
\hfill
\includegraphics[trim=1pt 0pt 5pt 4pt, clip=true, scale=.55]{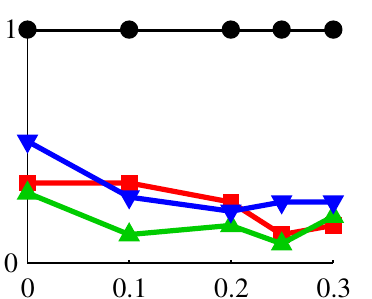}
\end{subfigure}

\begin{subfigure}[]{0.505\columnwidth}
\includegraphics[trim=2pt 0pt 5pt 4pt, clip=true, scale=.55]{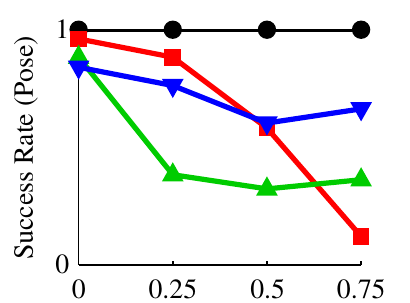}
\hfill
\includegraphics[trim=1pt 0pt 5pt 4pt, clip=true, scale=.55]{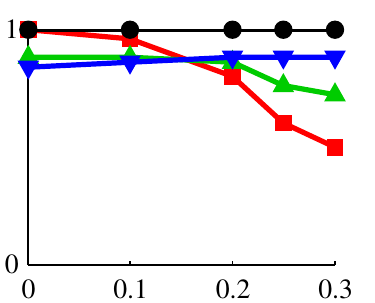}
\end{subfigure}
\hfill
\begin{subfigure}[]{0.485\columnwidth}
\includegraphics[trim=1pt 0pt 5pt 4pt, clip=true, scale=.55]{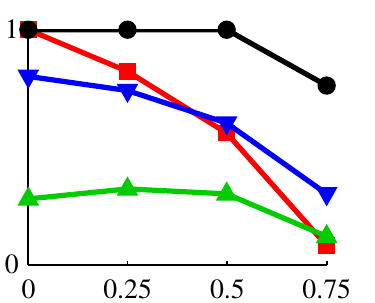}
\hfill
\includegraphics[trim=1pt 0pt 5pt 4pt, clip=true, scale=.55]{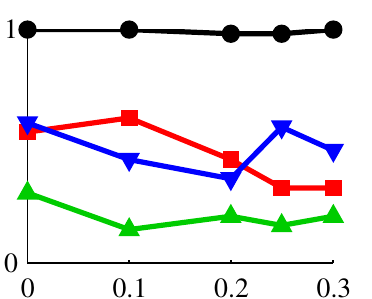}
\end{subfigure}

\begin{subfigure}[]{0.505\columnwidth}
\includegraphics[trim=2pt 0pt 5pt 4pt, clip=true, scale=.55]{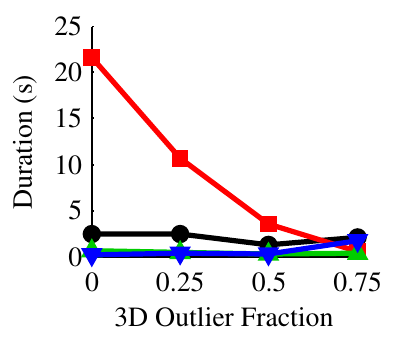}
\hfill
\includegraphics[trim=1pt 0pt 5pt 4pt, clip=true, scale=.55]{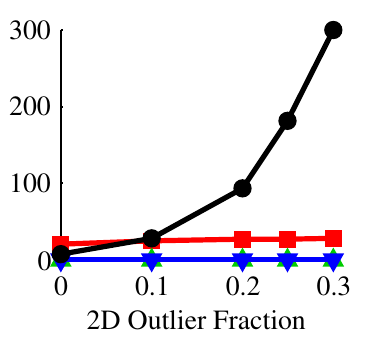}
\caption{Random Points $M = 30$}
\label{fig:random_torus_3d2doutliers}
\end{subfigure}
\hfill
\begin{subfigure}[]{0.485\columnwidth}
\includegraphics[trim=1pt 0pt 5pt 4pt, clip=true, scale=.55]{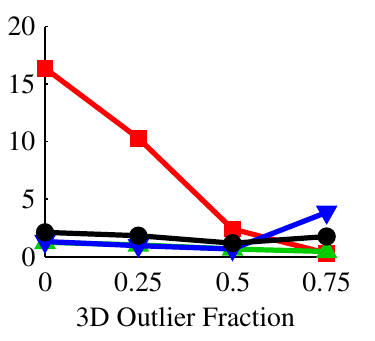}
\hfill
\includegraphics[trim=1pt 0pt 5pt 4pt, clip=true, scale=.55]{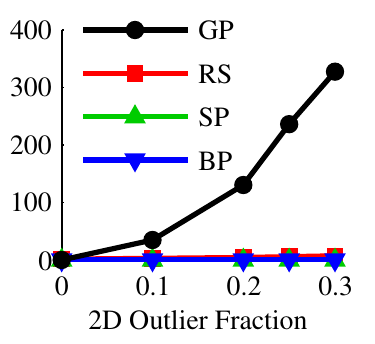}
\caption{CAD Structure $M = 27$}
\label{fig:cad_torus_3d2doutliers}
\end{subfigure}
\caption{Mean success rates and median runtimes with respect to the 3D and 2D outlier fractions for the random points and CAD structure datasets, for 50 Monte Carlo simulations per parameter value with the torus prior.}
\label{fig:randomcad_torus_3d2doutliers}
\vspace{-12pt}
\end{figure}

The evolution of the global lower and upper bounds is shown in Figure~\ref{fig:bound_evolution}: BB and P$n$P collaborate to increase the lower bound with BB guiding the search into better convergence basins and P$n$P refining the bound by jumping to the nearest local maximum (the staircase pattern). The majority of the time is spent decreasing the upper bound, indicating it will often find the global optimum when terminated early.

\begin{figure}[!t]
\centering
\begin{subfigure}[]{0.34\columnwidth}
\includegraphics[trim=0pt 0pt 0pt 10pt, clip=true, scale=.235]{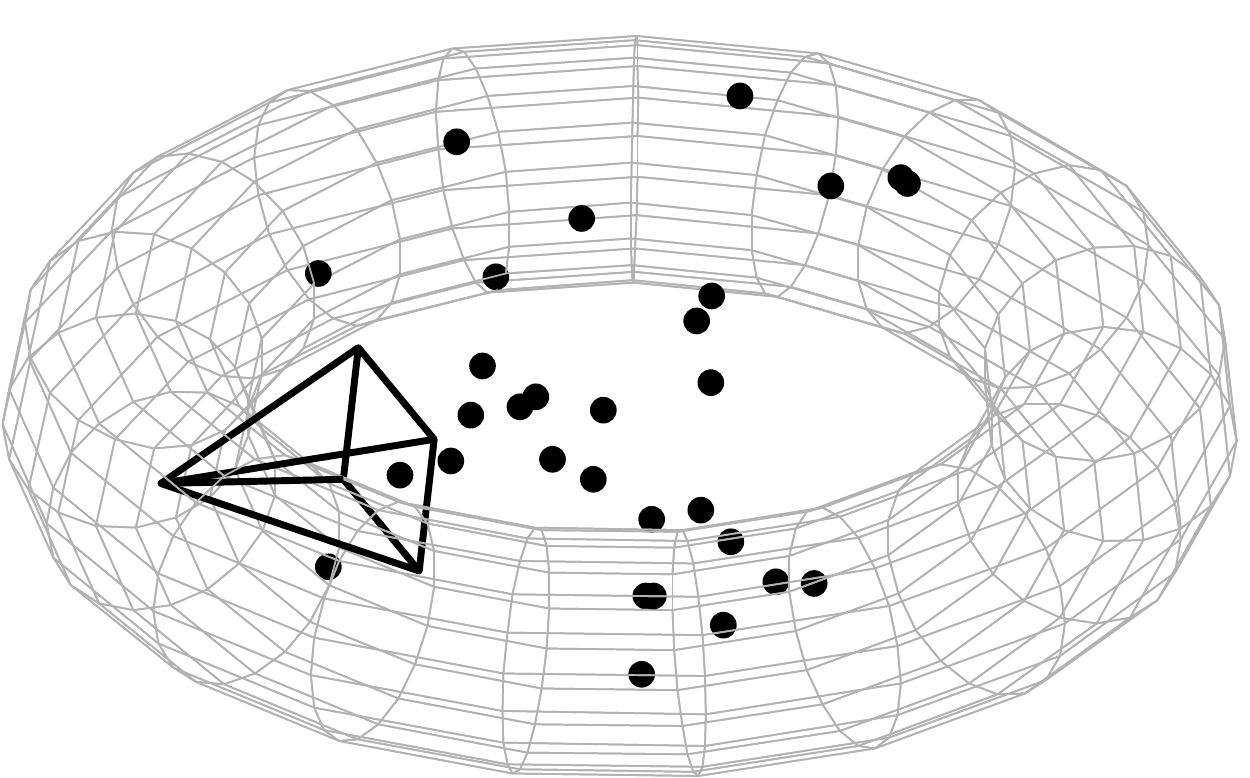}
\vfill
\includegraphics[trim=0pt 0pt 0pt 0pt, clip=true, scale=.235]{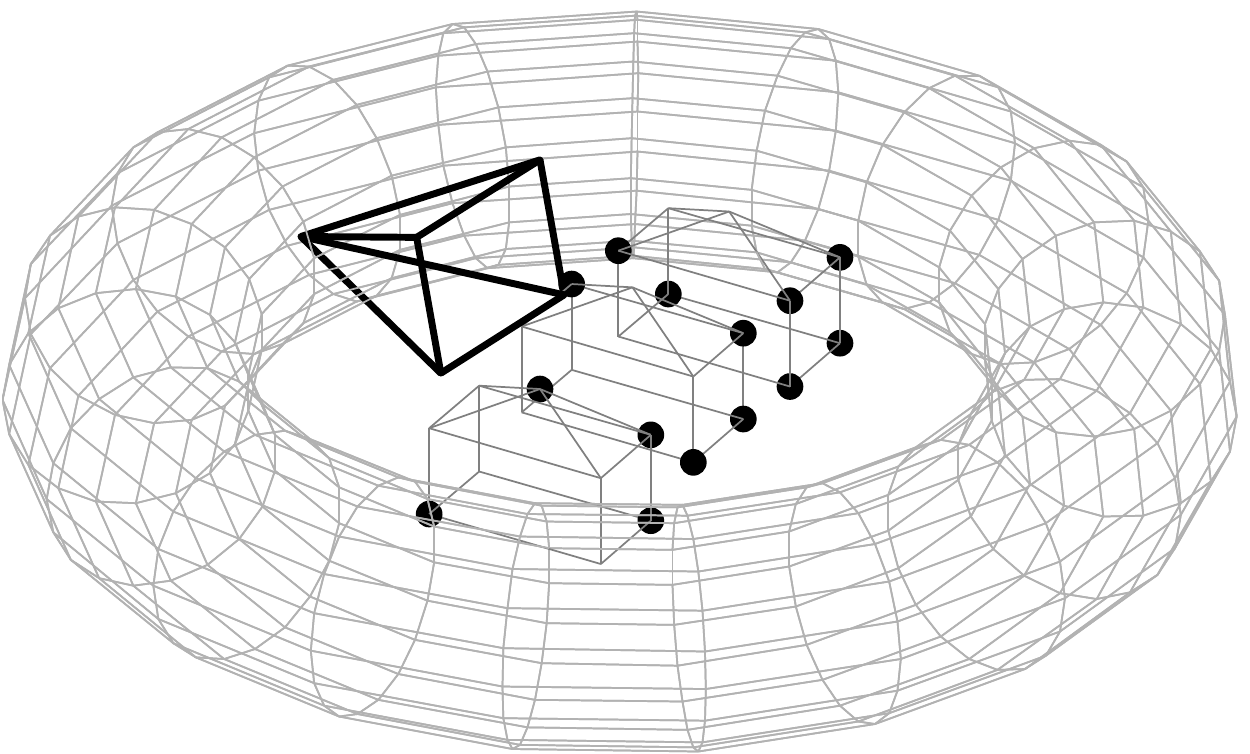}
\caption{3D Models}
\label{fig:models}
\end{subfigure}
\hfill
\begin{subfigure}[]{0.28\columnwidth}
\includegraphics[trim=2pt 6pt 4.5pt 8.5pt, clip=true, scale=.58]{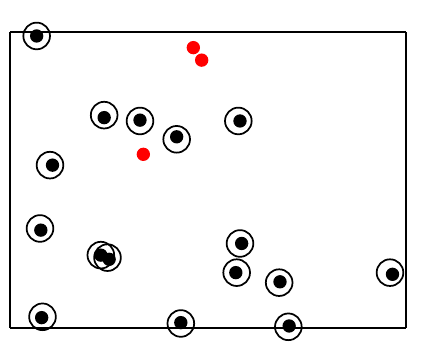}
\vfill
\includegraphics[trim=2pt 0pt 4.5pt 2pt, clip=true, scale=.58]{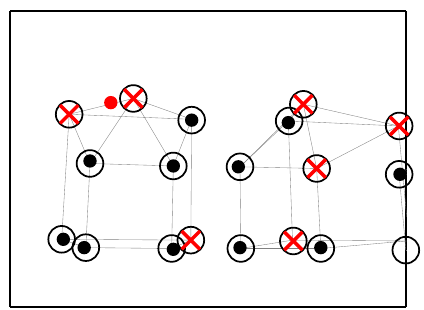}
\caption{2D Alignment}
\label{fig:2dalignment}
\end{subfigure}
\hfill
\begin{subfigure}[]{0.33\columnwidth}
\includegraphics[trim=0pt 0pt 0pt 2pt, clip=true, scale=.58]{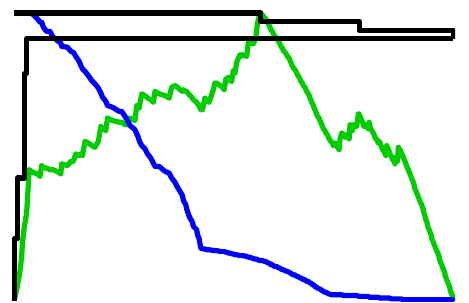}
\vfill
\includegraphics[trim=0pt 0pt 0pt 0pt, clip=true, scale=.58]{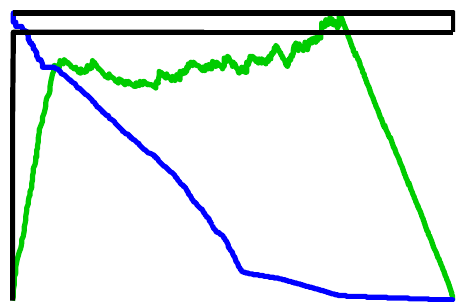}
\caption{Bound Evolution}
\label{fig:bound_evolution}
\end{subfigure}
\caption{Sample 2D and 3D results for two trials using the random points and repetitive CAD model datasets. (\subref{fig:models}) 3D models, true and GOPAC-estimated camera fulcra (completely overlapping) and toroidal pose priors. Only non-occluded 3D points are shown. (\subref{fig:2dalignment}) True projections of non-occluded 3D points are shown as black dots, 2D outliers as red dots, GOPAC projections as black circles and GOPAC-classified 3D outliers as red crosses. (\subref{fig:bound_evolution}) Evolution over time of the upper and lower bounds (black), remaining translation volume (blue) and translation queue size (green) as a fraction of their maximum values. Best viewed in colour.}
\label{fig:models_sample_results_bound_evolution}
\end{figure}

To show the improvement attributable to the tighter upper bounds derived, we measured the runtime of the algorithm with 10 random 3D points and 50\% 2D outliers using different upper bounds, shown in Figure \ref{fig:upper_bound_comparison}. The weak sphere-based bounding functions in (\ref{eqn:rotation_uncertainty_angle_weak}) and (\ref{eqn:translation_uncertainty_angle_weak}) are denoted $\psi_{r}^{w}$ and $\psi_{t}^{w}$ respectively, the tighter cuboid-based bounding functions in (\ref{eqn:rotation_uncertainty_angle}) and (\ref{eqn:translation_uncertainty_angle}) are denoted $\psi_{r}$ and $\psi_{t}$ respectively and the bounding function from (\ref{eqn:upper_bound_tighter}) is denoted $\Gamma$. Further results are provided in the appendix.

\begin{figure}[!t]
\centering
\includegraphics[trim=3pt 5pt 0pt 3pt, clip=true, width=0.75\columnwidth]{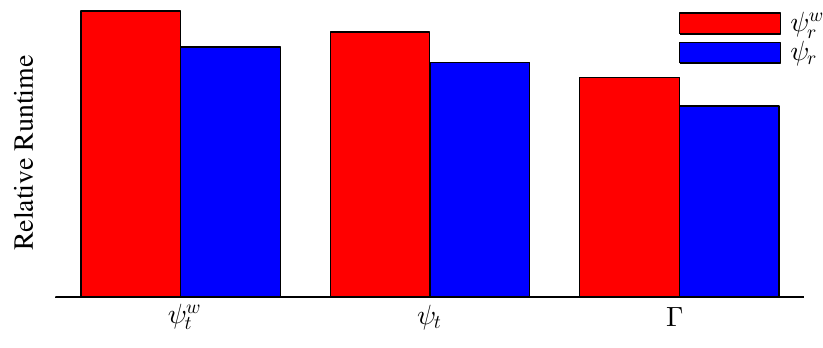}
\caption{Comparison of the different upper bound functions. Runtime is plotted relative to the maximum (leftmost) value. The weakest upper bound is 50\% slower than the tightest upper bound.}
\label{fig:upper_bound_comparison}
\vspace{-12pt}
\end{figure}

\subsection{Real Data Experiments}
\label{sec:results_real}

To evaluate the algorithm on real data, we use the \textsc{Data61/2D3D} (formerly NICTA) dataset \cite{namin2015multi}, a large and repetitive multi-modal outdoor dataset. Finding the pose of a camera within a large laser-scanned point-set without a good initialisation represents an unsolved problem in computer vision, which this work makes progress towards solving. For each image, we obtain the ground truth camera pose from the provided 2D--3D correspondences using EP$n$P \cite{lepetit2009epnp} followed by nonlinear P$n$P \cite{kneip2014opengv}. Extracting points from a laser scan that correspond to known pixels in an image is itself a challenging unsolved problem for 2D--3D registration pipelines. Due to the robust and optimal nature of GOPAC, we can relax this problem to isolating regions of the point-set that appear in the image and vice versa, from which putative correspondences may be drawn. We used semantic segmentations of the images and point-set to select regions that were potentially observable in both modalities, in this case the `building' class. We then used grid downsampling and $k$-means clustering on the class pixels and points independently to reduce them to a manageable size and converted the pixels to bearing vectors. While we do not know the correspondences in advance, each bearing vector has a good chance of having a 3D point as an inlier. In this way, we constructed a dataset consisting of a 3D point-set with 88 points, a set of 11 images containing 30 2D features and a set of ground truth camera poses. For this experiment, we used an inlier threshold of $\theta = 2^{\circ}$, multithreading and a 2D outlier fraction guess of $\omega_{\text{2D}} = 0.25$. The translation domain was $50\times5\times5$m, covering two lanes of the road, making use of the knowledge that the camera was mounted on a survey vehicle. SoftPOSIT and BlindPnP failed to find the correct camera pose for every image in this dataset, even when supplied the ground truth pose as a prior, due to the weak ground truth correspondences and an inability to handle 3D points behind the camera. Moreover, they do not natively support panoramic imagery and required an artificially restricted field of view to function.

Qualitative results for the GOPAC and RANSAC algorithms are shown in Figure~\ref{fig:results_2d3d} and quantitative results in Table~\ref{tab:results_2d3d}. GOPAC finds the optimal number of inliers for all frames and the correct camera pose for the majority of frames, despite the weakness of the 2D/3D point extraction process, surpassing the other methods. The failure modes for GOPAC were $180^{\circ}$ rotation flips, due to ambiguities arising from the low angular separation of points in the vertical direction. The difficulty of this ill-posed problem is illustrated by the performance of truncated GOPAC, which was not able to find all optima even after running for 30s, motivating the necessity for globally-optimal guided search.

\begin{table}[!t]
\centering
\caption{Camera pose results for the \textsc{Data61/2D3D} dataset. The median translation error, rotation error and runtime and the mean inlier recall and success rates are reported. $\left\lfloor\text{GP}\right\rfloor$ denotes truncated GOPAC, where search is terminated after $30$s, with no optimality guarantee. $\text{RS}_{K}$ denotes RANSAC with $K$ million iterations.}
\label{tab:results_2d3d}
\newcolumntype{C}{>{\centering\arraybackslash}X}
\begin{tabularx}{\columnwidth}{l C C C C}
\hline
Method & GP & $\left\lfloor\text{GP}\right\rfloor$ & $\text{RS}_{20}$ & $\text{RS}_{280}$\\
\hline
Translation Error (m)		& \textbf{2.30} & 3.10 & 20.3 & 28.5\\
Rotation Error ($^{\circ}$) & \textbf{2.08} & 3.04 & 178  & 179\\
Recall (Inliers)				& \textbf{1.00} & 0.97 & 0.75 & 0.81\\
Success Rate (Inliers)		& \textbf{1.00} & 0.45 & 0.00 & 0.00\\
Success Rate (Pose) 		& \textbf{0.82} & 0.64 & 0.09 & 0.09\\
\hline
Runtime (s) 				& 477 			& 34   & 34	  & 471\\
\hline
\end{tabularx}
\vspace{-12pt}
\end{table}

\section{Conclusion}
\label{sec:conclusion}

In this paper, we have introduced a robust and globally-optimal solution to the simultaneous camera pose and correspondence problem using inlier set cardinality maximisation. The method applies the branch-and-bound paradigm to guarantee optimality regardless of initialisation and uses local optimisation to accelerate convergence. The pivotal contribution is the derivation of the function bounds using the geometry of $SE(3)$. The algorithm outperformed other local and global methods on challenging synthetic and real datasets, finding the global optimum reliably. Further investigation is warranted to develop a complete 2D--3D pipeline, from segmentation and clustering to alignment.

{\small
\bibliographystyle{ieee}
\bibliography{citations_abbrv}

\begin{thebibliography}{10}\itemsep=-1pt

\bibitem{ask2013optimal}
E.~Ask, O.~Enqvist, and F.~Kahl.
\newblock Optimal geometric fitting under the truncated ${L}_2$-norm.
\newblock In {\em Proc. 2013 Conf. Comput. Vision Pattern Recognition}, pages
  1722--1729. IEEE, 2013.

\bibitem{aubry2014seeing}
M.~Aubry, D.~Maturana, A.~A. Efros, B.~C. Russell, and J.~Sivic.
\newblock Seeing {3D} chairs: exemplar part-based {2D-3D} alignment using a
  large dataset of {CAD} models.
\newblock In {\em Proc. 2014 Conf. Comput. Vision Pattern Recognition}, pages
  3762--3769, 2014.

\bibitem{besl1992method}
P.~J. Besl and N.~D. McKay.
\newblock A method for registration of {3-D} shapes.
\newblock {\em IEEE Trans. Pattern Anal. Mach. Intell.}, 14(2):239--256, 1992.

\bibitem{breuel2003implementation}
T.~M. Breuel.
\newblock Implementation techniques for geometric branch-and-bound matching
  methods.
\newblock {\em Computer Vision and Image Understanding}, 90(3):258--294, 2003.

\bibitem{brown2015globally}
M.~Brown, D.~Windridge, and J.-Y. Guillemaut.
\newblock Globally optimal {2D-3D} registration from points or lines without
  correspondences.
\newblock In {\em Proc. 2015 Int. Conf. Comput. Vision}, pages 2111--2119,
  2015.

\bibitem{campbell2016gogma}
D.~Campbell and L.~Petersson.
\newblock {GOGMA}: {G}lobally-{O}ptimal {G}aussian {M}ixture {A}lignment.
\newblock In {\em Proc. 2016 Conf. Comput. Vision Pattern Recognition}, pages
  5685--5694. IEEE, June 2016.

\bibitem{chin2016guaranteed}
T.-J. Chin, Y.~Heng~Kee, A.~Eriksson, and F.~Neumann.
\newblock Guaranteed outlier removal with mixed integer linear programs.
\newblock In {\em Proc. 2016 Conf. Comput. Vision Pattern Recognition}, pages
  5858--5866, 2016.

\bibitem{chum2008optimal}
O.~Chum and J.~Matas.
\newblock Optimal randomized {RANSAC}.
\newblock {\em IEEE Trans. Pattern Anal. Mach. Intell.}, 30(8):1472--1482,
  2008.

\bibitem{david2004softposit}
P.~David, D.~Dementhon, R.~Duraiswami, and H.~Samet.
\newblock {SoftPOSIT:} simultaneous pose and correspondence determination.
\newblock {\em Int. J. Comput. Vision}, 59(3):259--284, 2004.

\bibitem{dellaert2000structure}
F.~Dellaert, S.~M. Seitz, C.~E. Thorpe, and S.~Thrun.
\newblock Structure from motion without correspondence.
\newblock In {\em Proc. 2000 Conf. Comput. Vision Pattern Recognition},
  volume~2, pages 557--564. IEEE, 2000.

\bibitem{enqvist2012robust}
O.~Enqvist, E.~Ask, F.~Kahl, and K.~{\AA}str{\"o}m.
\newblock Robust fitting for multiple view geometry.
\newblock In {\em Proc. 2012 European Conf. Comput. Vision}, pages 738--751.
  Springer Berlin Heidelberg, 2012.

\bibitem{enqvist2015tractable}
O.~Enqvist, E.~Ask, F.~Kahl, and K.~{\AA}str{\"o}m.
\newblock Tractable algorithms for robust model estimation.
\newblock {\em Int. J. Comput. Vision}, 112(1):115--129, 2015.

\bibitem{fischler1981random}
M.~A. Fischler and R.~C. Bolles.
\newblock Random sample consensus: a paradigm for model fitting with
  applications to image analysis and automated cartography.
\newblock {\em Communications of the ACM}, 24(6):381--395, 1981.

\bibitem{fredriksson2016optimal}
J.~Fredriksson, V.~Larsson, C.~Olsson, and F.~Kahl.
\newblock Optimal relative pose with unknown correspondences.
\newblock In {\em Proc. 2016 Conf. Comput. Vision Pattern Recognition}, pages
  1728--1736. IEEE, 2016.

\bibitem{grimson1990object}
W.~E.~L. Grimson.
\newblock {\em Object Recognition by Computer: The Role of Geometric
  Constraints}.
\newblock MIT Press, Cambridge, MA, USA, 1990.

\bibitem{haralick1994review}
B.~M. Haralick, C.-N. Lee, K.~Ottenberg, and M.~N{\"o}lle.
\newblock Review and analysis of solutions of the three point perspective pose
  estimation problem.
\newblock {\em Int. J. Comput. Vision}, 13(3):331--356, 1994.

\bibitem{hartley2009global}
R.~I. Hartley and F.~Kahl.
\newblock Global optimization through rotation space search.
\newblock {\em Int. J. Comput. Vision}, 82(1):64--79, 2009.

\bibitem{hesch2011direct}
J.~A. Hesch and S.~I. Roumeliotis.
\newblock A direct least-squares {(DLS)} method for {PnP}.
\newblock In {\em Proc. 2011 Int. Conf. Comput. Vision}, pages 383--390. IEEE,
  2011.

\bibitem{huttenlocher1990recognizing}
D.~P. Huttenlocher and S.~Ullman.
\newblock Recognizing solid objects by alignment with an image.
\newblock {\em Int. J. Comput. Vision}, 5(2):195--212, 1990.

\bibitem{jurie1999solution}
F.~Jurie.
\newblock Solution of the simultaneous pose and correspondence problem using
  {G}aussian error model.
\newblock {\em Computer Vision and Image Understanding}, 73(3):357--373, 1999.

\bibitem{kneip2014opengv}
L.~Kneip and P.~Furgale.
\newblock {OpenGV}: A unified and generalized approach to real-time calibrated
  geometric vision.
\newblock In {\em Proc. 2014 Int. Conf. Robotics and Automation}, pages 1--8.
  IEEE, 2014.

\bibitem{kneip2014upnp}
L.~Kneip, H.~Li, and Y.~Seo.
\newblock {UPnP}: An optimal {O(n)} solution to the absolute pose problem with
  universal applicability.
\newblock In {\em Proc. 2014 European Conf. Comput. Vision}, pages 127--142.
  Springer, 2014.

\bibitem{kneip2011novel}
L.~Kneip, D.~Scaramuzza, and R.~Siegwart.
\newblock A novel parametrization of the perspective-three-point problem for a
  direct computation of absolute camera position and orientation.
\newblock In {\em Proc. 2011 Conf. Comput. Vision Pattern Recognition}, pages
  2969--2976. IEEE, 2011.

\bibitem{kneip2015sdicp}
L.~Kneip, Z.~Yi, and H.~Li.
\newblock {SDICP}: Semi-dense tracking based on iterative closest points.
\newblock In {\em Proc. 2015 British Machine Vision Conference}, pages
  100.1--100.12. BMVA Press, Sep. 2015.

\bibitem{land1960automatic}
A.~H. Land and A.~G. Doig.
\newblock An automatic method of solving discrete programming problems.
\newblock {\em Econometrica: Journal of the Econometric Society}, pages
  497--520, 1960.

\bibitem{lepetit2009epnp}
V.~Lepetit, F.~Moreno-Noguer, and P.~Fua.
\newblock {EPnP}: An accurate {O(n)} solution to the {PnP} problem.
\newblock {\em Int. J. Comput. Vision}, 81(2):155--166, 2009.

\bibitem{li2009consensus}
H.~Li.
\newblock Consensus set maximization with guaranteed global optimality for
  robust geometry estimation.
\newblock In {\em Proc. 2009 Int. Conf. Comput. Vision}, pages 1074--1080.
  IEEE, 2009.

\bibitem{li20073d}
H.~Li and R.~Hartley.
\newblock The {3D-3D} registration problem revisited.
\newblock In {\em Proc. 2007 Int. Conf. Comput. Vision}, pages 1--8. IEEE,
  2007.

\bibitem{li2012worldwide}
Y.~Li, N.~Snavely, D.~Huttenlocher, and P.~Fua.
\newblock Worldwide pose estimation using {3D} point clouds.
\newblock In {\em Proc. 2012 European Conf. Comput. Vision}, pages 15--29.
  Springer-Verlag, 2012.

\bibitem{li2010location}
Y.~Li, N.~Snavely, and D.~P. Huttenlocher.
\newblock Location recognition using prioritized feature matching.
\newblock In {\em Proc. 2010 European Conf. Comput. Vision}, pages 791--804.
  Springer, 2010.

\bibitem{lin2012simultaneous}
W.-Y. Lin, L.-F. Cheong, P.~Tan, G.~Dong, and S.~Liu.
\newblock Simultaneous camera pose and correspondence estimation with motion
  coherence.
\newblock {\em Int. J. Comput. Vision}, 96(2):145--161, 2012.

\bibitem{lowe2004distinctive}
D.~G. Lowe.
\newblock Distinctive image features from scale-invariant keypoints.
\newblock {\em Int. J. Comput. Vision}, 60(2):91--110, 2004.

\bibitem{makadia2007correspondence}
A.~Makadia, C.~Geyer, and K.~Daniilidis.
\newblock Correspondence-free structure from motion.
\newblock {\em Int. J. Comput. Vision}, 75(3):311--327, 2007.

\bibitem{marchand2016pose}
E.~Marchand, H.~Uchiyama, and F.~Spindler.
\newblock Pose estimation for augmented reality: a hands-on survey.
\newblock {\em IEEE Trans. Vis. Comput. Graphics}, 22(12):2633--2651, 2016.

\bibitem{moreno2008pose}
F.~Moreno-Noguer, V.~Lepetit, and P.~Fua.
\newblock Pose priors for simultaneously solving alignment and correspondence.
\newblock In {\em Proc. 2008 European Conf. Comput. Vision}, pages 405--418.
  Springer, 2008.

\bibitem{mundy2006object}
J.~L. Mundy.
\newblock Object recognition in the geometric era: A retrospective.
\newblock In J.~Ponce, M.~Hebert, C.~Schmid, and A.~Zisserman, editors, {\em
  Toward Category-Level Object Recognition}, volume 4170 of {\em Lecture Notes
  in Computer Science}, pages 3--28. Springer, Berlin, Heidelberg, 2006.

\bibitem{namin2015multi}
S.~T. Namin, M.~Najafi, M.~Salzmann, and L.~Petersson.
\newblock A multi-modal graphical model for scene analysis.
\newblock In {\em Proc. 2015 Winter Conf. Applications Comput. Vision}, pages
  1006--1013. IEEE, 2015.

\bibitem{noll2011markerless}
T.~N{\"o}ll, A.~Pagani, and D.~Stricker.
\newblock {Markerless Camera Pose Estimation - An Overview}.
\newblock In A.~Middel, I.~Scheler, and H.~Hagen, editors, {\em Visualization
  of Large and Unstructured Data Sets - Applications in Geospatial Planning,
  Modeling and Engineering (IRTG 1131 Workshop)}, volume~19 of {\em OpenAccess
  Series in Informatics (OASIcs)}, pages 45--54, Dagstuhl, Germany, 2011.
  Schloss Dagstuhl--Leibniz-Zentrum fuer Informatik.

\bibitem{olson2001general}
C.~F. Olson.
\newblock A general method for geometric feature matching and model extraction.
\newblock {\em Int. J. Comput. Vision}, 45(1):39--54, 2001.

\bibitem{olsson2010outlier}
C.~Olsson, A.~Eriksson, and R.~Hartley.
\newblock Outlier removal using duality.
\newblock In {\em Proc. 2010 Conf. Comput. Vision Pattern Recognition}, pages
  1450--1457. IEEE, 2010.

\bibitem{olsson2009branch}
C.~Olsson, F.~Kahl, and M.~Oskarsson.
\newblock Branch-and-bound methods for euclidean registration problems.
\newblock {\em IEEE Trans. Pattern Anal. Machine Intelligence}, 31(5):783--794,
  2009.

\bibitem{paudel2015lmi}
D.~P. Paudel, A.~Habed, C.~Demonceaux, and P.~Vasseur.
\newblock {LMI}-based {2D-3D} registration: From uncalibrated images to
  {E}uclidean scene.
\newblock In {\em Proc. 2015 Conf. Comput. Vision Pattern Recognition}, pages
  4494--4502, 2015.

\bibitem{paudel2015robust}
D.~P. Paudel, A.~Habed, C.~Demonceaux, and P.~Vasseur.
\newblock Robust and optimal sum-of-squares-based point-to-plane registration
  of image sets and structured scenes.
\newblock In {\em Proc. 2015 Int. Conf. Comput. Vision}, pages 2048--2056,
  2015.

\bibitem{sattler2011fast}
T.~Sattler, B.~Leibe, and L.~Kobbelt.
\newblock Fast image-based localization using direct {2D}-to-{3D} matching.
\newblock In {\em Proc. 2011 Int. Conf. Comput. Vision}, pages 667--674. IEEE,
  2011.

\bibitem{sattler2012improving}
T.~Sattler, B.~Leibe, and L.~Kobbelt.
\newblock Improving image-based localization by active correspondence search.
\newblock In {\em Proc. 2012 European Conf. Comput. Vision}, pages 752--765.
  Springer-Verlag, 2012.

\bibitem{sim2006removing}
K.~Sim and R.~Hartley.
\newblock Removing outliers using the ${L}_{\infty}$ norm.
\newblock In {\em Proc. 2006 Conf. Comput. Vision Pattern Recognition},
  volume~1, pages 485--494. IEEE, 2006.

\bibitem{svarm2016city}
L.~Svarm, O.~Enqvist, F.~Kahl, and M.~Oskarsson.
\newblock City-scale localization for cameras with known vertical direction.
\newblock {\em IEEE Trans. Pattern Anal. Machine Intelligence}, 2016.

\bibitem{svarm2014accurate}
L.~Sv{\"a}rm, O.~Enqvist, M.~Oskarsson, and F.~Kahl.
\newblock Accurate localization and pose estimation for large {3D} models.
\newblock In {\em Proc. 2014 Conf. Comput. Vision Pattern Recognition}, pages
  532--539. IEEE, 2014.

\bibitem{yang2016goicp}
J.~Yang, H.~Li, D.~Campbell, and Y.~Jia.
\newblock {Go-ICP}: A globally optimal solution to {3D} {ICP} point-set
  registration.
\newblock {\em IEEE Trans. Pattern Anal. Mach. Intell.}, 38(11):2241--2254,
  2016.

\bibitem{yu2011adversarial}
J.~Yu, A.~Eriksson, T.-J. Chin, and D.~Suter.
\newblock An adversarial optimization approach to efficient outlier removal.
\newblock In {\em Proc. 2011 Int. Conf. Comput. Vision}, pages 399--406. IEEE,
  2011.

\end{thebibliography}
}

\end{document}